\documentclass[twoside]{article}

%
\usepackage[accepted]{aistats2022arxiv}
%


 

\usepackage{amsmath,amsfonts}
\usepackage{amsthm}
\usepackage{amssymb}
\usepackage{bm}
\usepackage{bbm}
\usepackage{natbib}

\usepackage{color} 
\usepackage{algorithm}
\usepackage{algpseudocode}
\usepackage{subcaption}
\usepackage[hidelinks]{hyperref}
\usepackage{accents} 
\usepackage{relsize} 

\bibliographystyle{apalike}

\DeclareBoldMathCommand{\p}{p}
\DeclareBoldMathCommand{\z}{z}
\DeclareBoldMathCommand{\v}{v}
\DeclareBoldMathCommand{\w}{w}
\DeclareBoldMathCommand{\y}{y}
\DeclareBoldMathCommand{\h}{h}
\DeclareBoldMathCommand{\q}{q}
\DeclareBoldMathCommand{\e}{e}
\DeclareBoldMathCommand{\b}{b}
\DeclareBoldMathCommand{\u}{u}
\DeclareBoldMathCommand{\L}{L}
\DeclareBoldMathCommand{\1}{1}
\DeclareBoldMathCommand{\0}{0}
\DeclareBoldMathCommand{\a}{a}
\DeclareBoldMathCommand{\x}{x}
\DeclareBoldMathCommand{\I}{I}
\DeclareBoldMathCommand{\bell}{\ell}

\renewcommand{\hat}{\widehat}
\renewcommand{\tilde}{\widetilde}
\newcommand{\regret}{\mathcal{R}}
\newcommand{\KL}{\textnormal{KL}}
\newcommand\Eb[1]{\E\left[ #1\right]}
\newcommand{\reals}{\mathbb{R}}
\newcommand{\lhat}{\hat{\ell}}
\newcommand{\Lhat}{\hat{L}}
\newcommand{\bLhat}{\hat{\L}}

\newcommand{\blhat}{\hat{\bell}}

\newcommand{\bltil}{\tilde{\bell}}
\newcommand{\ltil}{\tilde{\ell}}
\newcommand{\half}{\tfrac{1}{2}}
\newcommand{\id}{\mathbbm{1}}
\newcommand{\Phist}{F^{\star}}
\newcommand{\Phisttr}{\accentset{\mathsmaller{\mathsmaller{\mathsmaller{\mathsmaller{\triangle}}}}}{F}^{\star}}
\newcommand{\dtmax}{\rho_t^{\max}}
\newcommand{\dTmax}{\rho_T^{\max}}
\newcommand{\dstar}{\rho^{\star}}
\newcommand{\Otil}{\widetilde{O}}

\newcommand{\sumK}{\sum_{i=1}^K}

\newcommand{\sumKip}{\sum_{i'=1}^{K'}}
\newcommand{\sumT}{\sum_{t=1}^T}
\newcommand{\sumTnolim}{\sum\nolimits_{t=1}^T}

\newcommand{\sumt}{\sum_{s=1}^t}
\newcommand{\sumgam}{\sum_{\gamma \in \Gamma}}

\newcommand\inner[2]{\langle #1, #2 \rangle}
\newcommand{\diag}{\textnormal{diag}}
\newcommand{\Sset}{\mathcal{S}}
\newcommand{\Bset}{\mathcal{B}}

\DeclareMathOperator{\E}{\mathbb E}

\DeclareMathOperator*{\argmin}{arg\,min}

\newcommand{\todo}[1]{%
\ifmmode
\text{\textcolor{red}{TODO: #1}}
\else
\textcolor{red}{TODO: #1}
\fi
}

\usepackage{thmtools, thm-restate}

\newtheorem{theorem}{Theorem}
\newtheorem{lemma}{Lemma}

\begin{document}

%

%

\twocolumn[

\aistatstitle{Nonstochastic Bandits and Experts with Arm-Dependent Delays}
\runningtitle{Arm-Dependent Delays}
\aistatsauthor{%
  Dirk van der Hoeven \\
  dirk@dirkvanderhoeven.com \\
  Dept.\ of Computer Science \\
  Università degli Studi di Milano, Italy \\
  \And
  Nicolò Cesa-Bianchi \\
  nicolo.cesa-bianchi@unimi.it \\
  DSRC \& Dept.\ of Computer Science \\
  Università degli Studi di Milano, Italy\\
} 
\aistatsaddress{~}]

\begin{abstract}
We study nonstochastic bandits and experts in a delayed setting where delays depend on both time and arms. While the setting in which delays only depend on time has been extensively studied, the arm-dependent delay setting better captures real-world applications at the cost of introducing new technical challenges.
In the full information (experts) setting, we design an algorithm with a first-order regret bound that reveals an interesting trade-off between delays and losses. We prove a similar first-order regret bound also for the bandit setting, when the learner is allowed to observe how many losses are missing.
These are the first bounds in the delayed setting that depend on the losses and delays of the best arm only.
When in the bandit setting no information other than the losses is observed, we still manage to prove a regret bound through a modification to the algorithm of \citet{zimmert2020optimal}.
Our analyses hinge on a novel bound on the drift, measuring how much better an algorithm can perform when given a look-ahead of one round.
\end{abstract}

\section{INTRODUCTION} 

Delayed feedback is a common challenge in many online learning problems. For example, suppose you want to sell several shares of a company on the stock market, but you do not know yet at what price you want to sell them. To learn a good price, you could do a piecemeal sale and track the price at which shares sell.
However, shares on sale at a low price will most likely be bought quickly, while shares on sale at a high price could be bought only much later in time. Hence, depending on the price you set, a sale may be completed sooner or later. 
Similarly, when deciding which advertisement to show on a web page, one might receive feedback at different points in time depending on which advertisement you show. If you show an ad of an expensive car, it might take some time for the person who saw the advertisement to make the decision to buy the car
Vice versa, showing an advertisement of something significantly cheaper than a car might not induce a long delay between impression and conversion. 

In this paper we work in a general setting of sequential decision-making that captures these problems. We consider two forms of feedback: full information feedback, in which after each prediction the learner observes the losses of all actions, and bandit feedback, in which the learner only observes the loss of the chosen action.
For both full information and bandit setting we assume an oblivious adversary, which is to say that both losses and delays are adversarially generated before the start of the first round.

Formally, the full information setting works as follows. In each round $t = 1, \ldots, T$ the learner issues a prediction $\q_t \in \triangle$ and suffers loss $\inner{\q_t}{\bell_t}$, where $\triangle = \big\{\q: q(i) \geq 0, \sumK q(i) = 1\big\}$ is the $K$-dimensional simplex. The loss $\ell_t(i) \in [0, 1]$ of arm $i$ in round $t$ is revealed to the learner at the end of round $t + d_t(i)$, where $d_t(i) \ge 0$.
In the full information setting, the goal is to have small regret $\regret_T(\u)$ against any $\u \in \triangle$ after any number $T$ of rounds, where
\begin{equation*}
    \regret_T(\u) = \sum\nolimits_{t=1}^T \inner{\q_t - \u}{\bell_t}
\end{equation*}
In the bandit setting, the learner issues prediction $i_t \in [K]$ and suffers loss $\ell_t(i_t)$. We consider two variants of the bandit setting. In the first variant, the learner receives the loss $\ell_t(i_t)$ as well as the number of missing observations $\rho_t(i_t) = |\{s: s < t,  s + d_s(i_t) \geq t\}|$ for arm $i_t$ at the end of round $t + d_t(i_t)$, when $\ell_t(i_t)$ is observed. We refer to this variant as the partially-concealed bandit setting. In the second variant, at the end of round $t + d_t(i_t)$ the learner just receives $\ell_t(i_t)$. We refer to the second variant as the concealed bandit setting. In both variants, the goal is to have small expected regret: $\Eb{\regret_T(\u)} = \Eb{\sum\nolimits_{t=1}^T (\ell_t(i_t) - \inner{\u}{\bell_t})}$,
%
%
where the expectation is with respect to the randomness in the learner's actions. 

The reason why we differentiate between the partially-concealed and concealed bandit setting is the following. Due to arm-dependent delays, we face an additional challenge in the bandit setting. In the full information setting, $\rho_t(i)$ may be used to tune the algorithm, and is readily available to the learner who can keep track of the missing losses for each arm. In the bandit setting, however, the learner only knows whether or not a loss is missing for an arm if that arm was played. Therefore, the learner cannot compute $\rho_t(i)$ or have knowledge of $d_t(i)$ if arm $i$ was not played in round $t$. Assuming that the learner also observes $\rho_t(i_t)$ when $\ell_t(i_t)$ is observed is a relatively mild assumption slightly simplifying the aforementioned challenge. Other assumptions in the literature are stronger than ours. For example, a common assumption is that delays are the same in each round \citep{weinberger2002delayed, neu2010online, neu2014online, cesa2016delay} or known before issuing a prediction \citep{thune2019nonstochastic, bistritz2019exp3}\footnote{Although \citet{bistritz2019exp3} also claim a result that does not use a-priori knowledge of delay, there appears to be an error in their analysis: see the discussion by \citet{gyorgy2020adapting}.}. When all arms suffer the same delay, $\rho_t(i_t)$ can be readily computed by simply counting the number of missing observations at prediction time. Regardless, we also study the case in which the learner does not observe $\rho_t(i_t)$, and derive algorithms for both the partially concealed and the concealed bandit settings.

\paragraph{Related Works}



In the full information setting, \citet{weinberger2002delayed} were the first to study online learning with delayed feedback. They assumed full information feedback and a constant and known delay. Subsequent works by \citet{zinkevich2009slow, joulani2013online, quanrud2015online} relaxed the assumption on the delay, and worked in the more general setting of online convex optimization. 
\citet{joulani2016delay} also work with a convex domain and provide a reduction for deriving algorithms with a gradient-dependent regret bound using a strongly convex regularizer. 
Recently, \citet{flaspohler2021online} analysed the role of optimism in online convex optimization with delays, and developed several adaptive algorithms for the arm-independent setting. 

Other works consider the distributed or parallel online convex optimization setting, see \citep{ agarwal2012distributed, mcmahan2014delay, sra2015adadelay, hsieh2020multi, vanderhoeven2021distributed} and the references therein. In these works, delays arise due to communication between agents, gradient evaluations, or distance between agents in a network of agents. 


All previous works in the nonstochastic bandit setting considered delayed feedback assuming equal delays for each arm. The impact of delay under bandit feedback was first studied by \citet{neu2010online, neu2014online}, who assumed that the delay was constant and known. \citet{cesa2019delaycoop} stated a lower bound for nonstochastic bandits of order $\max\{\sqrt{KT}, \sqrt{\ln(K)D}\}$, where $D$ is the total delay. \citet{thune2019nonstochastic, bistritz2019exp3, bistritz2021nodiscounted} all provided algorithms based on EXP3 that matched the lower bound up to a factor $\ln(K)$ by utilizing either a priori knowledge of $D$ or knowledge of $d_t$ before issuing a prediction. \citet{zimmert2020optimal} provide an algorithm matching the lower bound without using any additional assumptions on $d_t$. \citet{gyorgy2020adapting} present an improved analysis of a variant of EXP3 under delayed feedback, and provide a delay adaptive regret bound, a data and delay adaptive regret bound, and a high-probability regret bound.

Another related setting is the nonstochastic bandits with composite anonymous feedback setting \citep{cesa2018nonstochastic}. In this setting, the learner does not observe the identity of the losses, which is to say that the learner does not see which round a given loss is from. To complicate matters further, losses from several rounds may be composed without the possibility to distinguish which rounds or actions are in the composed losses. As long as a uniform upper bound on the delays $d^{\star}$ is known, the algorithm by  \citet{cesa2018nonstochastic} also works in our setting, but using this algorithm would lead to a suboptimal regret bound of order $\sqrt{d^{\star}KT\ln(K)}$. 


A stochastic version of the concealed bandit setting has been considered by \citet{gael2020stochastic, lancewicki2021stochastic}. They study i.i.d.\ rewards (rather than losses) and assume a slightly more general feedback model in which the learner does not always see the reward for actions. 
Other works in the stochastic setting include \citep{chapelle2011empirical, dudik2011efficient, joulani2013online, desautels2014parallelizing, chapelle2014modeling, mandel2015queue, vernade2017stochastic, vernade2018contextual, pike2018bandits}, all with varying assumptions on the delay, but all with uniform delay over the arms.


\paragraph{Contributions}

In the full information setting, we provide an algorithm that satisfies, for any $\u \in \triangle$, a first-order bound of the form
\begin{equation}\label{eq:fullintrobound}
    \regret_T(\u) = \Otil\left(\sqrt{\big(\KL(\u, \pi) + \ln(T)\big)\inner{\u}{\L_T + \L_T^\rho}}\right) 
\end{equation}
%
where $\pi$ is a prior set by the learner, $L_T(i) = \sumT \ell_t(i)$, $L_T^\rho(i) = \sumT \ell_t(i)\rho_t(i)$, and $\Otil$ hides logarithmic factors. This bound takes advantage of scenarios when there is an arm with consistently small losses, or when the delays of a given arm tend to be small. In the worst case, bound~\eqref{eq:fullintrobound} is $\Otil\big(\sqrt{T(1 + \dTmax)\ln(K)}\big)$ for a uniform prior $\pi$. While several first-order regret bounds are known in the literature---see for example \citep{hutter2004prediction, kalai2005efficient, cesa2006prediction, cesa2007improved, vanerven2014follow, chen2021impossible}---to the best of our knowledge there are no first-order regret bounds for the delayed feedback setting. The regret bound in~\eqref{eq:fullintrobound} reveals an interesting trade-off. When competing with $\e_{i^\star} = \argmin_{\u \in \triangle} \inner{\u}{\L_T}$ we could suffer a large penalty for delay if $i^\star$ always suffers large delays (and thus incur large regret). On the other hand, if we try to balance delay and losses by competing with $\argmin_{\u \in \triangle} \{\inner{\u}{\L_T} + \sqrt{\KL(\u, \pi)\inner{\u}{\L_T + \L_T^\rho}}\}$, we could incur a smaller penalty for the delay at the cost of moving away from the minimizer of the loss. In our first motivating example, these scenarios correspond to, respectively, the optimal price of a share always incurring a small loss but a large delay, and a price at which a share always sells quickly while possibly incurring a larger loss. 

In the partially concealed bandit setting we also provide an algorithm with a trade-off between losses and delays: our algorithm guarantees that the regret $\Eb{\regret_T(\u)}$ is bounded, for any $\u \in \triangle$, by 
\begin{equation*}\label{eq:banditintrobound1}
    \Otil \Big(\sqrt{K\inner{\u}{\L_T}} + \sqrt{\big(\KL(\u, \pi) + \ln(T)\big)\inner{\u}{\L_T^\rho}}\Big)
\end{equation*}
%
%
In the bandit literature there are several algorithms known to achieve first-order regret bounds. For example, the Green algorithm \citep{allenberg2006hannan}, FPL-TrIX \citep{neu2015first}, or FTRL with the log-barrier \citep{foster2016learning} all guarantee a first-order regret bound. In the delayed bandit literature, \citet{gyorgy2020adapting} provide an algorithm with a regret bound of order $\sqrt{d^\star \inner{\u}{\L_T}} + \sqrt{\ln(K)\sumK L_T(i)}$ for $d^\star = \max_{i,t} d_t(i)$, which---to the best of our knowledge---is the one closest to a first-order regret bound. While in the worst case this bound is a factor $\ln(T)$ better than 
our bound, unlike our bound it does not capture the trade-off between delays and losses.  

In the concealed bandit setting, we circumvent the need of knowing $\rho_t(i_t)$ by modifying the algorithm of \citet{zimmert2020optimal}. Our variant guarantees an expected regret bound of order
\begin{equation*}\label{eq:banditintrobound2}
\begin{split}
    \sqrt{KT} + \sqrt{\ln (K)\Eb{\sumT \sumK q_t(i)\rho_t(i)}} + \dstar
\end{split}
\end{equation*}
where $\dstar \geq \max_{i, t} \rho_t(i)$ is the maximum number of missing observations and , we recall it, $\q_1,\ldots,\q_T$ are the learner's predictions. While our bound is stated in terms of $\sumK q_t(i)\rho_t(i)$, a hardly interpretable quantity which depends on the actions of the algorithm, we may also recover a bound in terms of $\sumK q_t(i)\rho_t(i) \leq \max_i \rho_t(i)$, the maximum number of missing observations per round. Note that $\sumK q_t(i)\rho_t(i)$ or even $\max_{i} \rho_t(i)$ are not known in the concealed bandit setting, making adaptive tuning necessary.  \citet{zimmert2020optimal} consider a special case of our setting in which 
delays are equal across arms,
which we call the arm-independent delay setting. In the arm-independent delay setting $\sumK q_t(i)\rho_t(i) = \rho_t$ is known and may be used for tuning, which \citet{zimmert2020optimal} do to recover the optimal regret bound. However, in our arm-dependent delay setting we do not know $\sumK q_t(i)\rho_t(i)$. We address this challenge by modifying the algorithm of \citet{zimmert2020optimal}, with which we recover the results of \citet{zimmert2020optimal} up to the additive $\dstar$ in the arm-independent setting as well as providing the bound in \eqref{eq:banditintrobound2} in the arm-dependent delay setting. The data-adaptive algorithm of \citet{gyorgy2020adapting} could also be applied in the concealed bandit setting, as they refrain from using any information not available to the learner in the tuning of their algorithm. However, this would yield an additional factor $\ln(K)$ and an additive $(\dstar)^2$ in the regret bound, rather than the additive $\dstar$ term which we obtain.

As for the technical contributions, our algorithm is an instance of Follow the Regularized Leader (FTRL) with corrections. \citet{hazan2010extracting, steinhardt2014adaptivity, wei2018more, chen2021impossible} also use FTRL with corrections, but solely in the non-delayed setting. Our contribution here is the analysis of the FTRL with corrections algorithm under delayed feedback. In particular, we provide a powerful tool for deriving regret bounds in both the full information and bandit settings. 
In our analysis we build upon the framework of \citet{gyorgy2020adapting}, who split the regret into a cheating regret term and a drift term. The cheating regret, which is the regret of an omniscient algorithm knowing all past losses (including the loss of the current round), can be bounded by standard techniques. The drift is the difference between 
the loss of the algorithm and the loss of the omniscient predictor.
While \citet{gyorgy2020adapting} only provide bounds on the drift term for a version of exponential weights, we provide a general bound for FTRL algorithms under mild conditions on the regularizer, thus significantly simplifying the subsequent analysis. In the full information setting, we use this new bound to simultaneously learn the optimal learning rate and the best arm using a particular version of the negative entropy as regularizer. In the partially concealed bandit setting, we develop a new hybrid regularizer that allows us to simultaneously adapt to delay and loss of the best arm. In the concealed bandit setting, we show the power of our new bound by recovering the nontuned results of \citet{zimmert2020optimal}, and subsequently adapting the tuning to suit our purposes. 

The remainder of the paper is organised as follows. In Section~\ref{sec:algorithm} we introduce the algorithm and the corresponding analysis, which we use to derive our results. In Section~\ref{sec:Full information} we specify the regularizer used in the aforementioned algorithm and use the analysis from Section~\ref{sec:algorithm} to prove a regret bound. In Section~\ref{sec:bandits} we describe and analyze the regularizers used to derive the results for the concealed and partially concealed bandit settings. Finally, in section~\ref{sec:conclusion} we discuss some open problems. 

\section{ALGORITHM}\label{sec:algorithm}
One of the main challenges in designing an adaptive algorithm that can handle delayed feedback is that tuning the learning rate may involve unknown losses due to the delay. In order to resolve this issue, we learn the best learning rate by specifying multiple learning rates for each expert. This means that we create several pseudo-experts and the algorithm computes distribution a $\p_t^{av}$ over the pseudo-experts experts in each round. To compute the actual predictions we sum the weights over pseudo-experts that share an expert, i.e., we sum over the learning rates. A precise definition of our predictions can be found in the subsequent sections.

\begin{algorithm}[t]
\caption{Delayed FTRL with Corrections}\label{alg:framework}
\begin{algorithmic}[1]
\Require Regularizers $\Phi_t$ and $\Psi_t$, priors $\p_1^\Phi$ and $\p_1^\Psi$. 
\For{$t = 1 \ldots T$}
\State Using $R_t$ defined in \eqref{eq:FTRLregularizer}, compute 
\begin{equation}\label{eq:frameworkprediction}
    \p_t^{av} = \argmin_{\p \in \triangle'} \inner{\p}{\bLhat_t^{av}} + R_t(\p)
\end{equation}
\For{$i' \in K'$}
\For{$s \in A_t(i') \equiv \{s: s + d_s(i') = t\}$}
\State Get loss $\ltil_s(i')$ and correction $a_s(i')$
\State Set $\lhat_s(i') = \ltil_s(i') + a_s(i')$  
\EndFor
\State Update $\Lhat_{t+1}^{av}(i') = \Lhat_t^{av}(i') + \sum_{s \in A_t(i')} \lhat_s(i')$
\EndFor
\EndFor
\end{algorithmic}
\end{algorithm}

Our version of FTRL with corrections is given in Algorithm \ref{alg:framework}. Note that both the correction $a_t(i')$ and loss (estimate) $\ltil_t(i')$ are left unspecified. $\ltil_t(i')$ has different definitions in the bandit and the full information setting, as only in the bandit setting we need to estimate the loss.
Similarly, $a_t(i')$ will be specified in the relevant sections as different choices for $a_t(i')$ lead to different regret bounds. We use this flexibility to its full extent by choosing different corrections for the full information, partially concealed bandit, and concealed bandit settings.

Our analysis builds on the following simple but useful decomposition due to \citet{gyorgy2020adapting}:
\begin{equation}\label{eq:driftcheating}
\begin{split}
    & \sum\nolimits_{t=1}^T \inner{\p_t^{av} - \u}{\blhat_t} \\
    & = \underset{\textnormal{Cheating regret}}{\underbrace{\sum\nolimits_{t=1}^T \inner{\p_{t+1} - \u}{\blhat_t}}} + \underset{\textnormal{Drift}}{\underbrace{\sum\nolimits_{t=1}^T \inner{\p_t^{av} - \p_{t+1}}{\blhat_t}}}
\end{split}
\end{equation}
Here $\p_{t+1}$ is the FTRL distribution when all losses up to and including the loss at round $t$ are known:
\begin{align}
\label{eq:lookahead}
    \p_{t+1} = \argmin\nolimits_{\p \in \triangle'} \inner{\p}{\bLhat_t} + R_t(\p)
\end{align}
We let $\bLhat_t = \sum_{s = 1}^t \blhat_s$ and also let $\triangle'$ be the enlarged simplex $\{\p : p(i') \geq 0, \sumKip p(i') = 1\}$ over $K' \geq K$ coordinates. 
Note our notational convention here: $\bLhat_t^{av}$ contains all the information available to the learner \emph{before} issuing a prediction in round $t$ and is used to compute $\p_t^{av}$, whereas $\bLhat_t$ contains all the losses up to and \emph{including} $\blhat_t$. 

For some $\Psi_t$ and $\Phi_t$ to be specified later, we always use regularizers of the form 
\begin{equation}\label{eq:FTRLregularizer}
    R_t(\p) =  B_{\Psi_t}(\p, \p^\Psi_1) + B_{\Phi_t}(\p, \p^\Phi_1)
\end{equation}
where $B_F(\x, \y) = F(\x) - F(\y) - \inner{\nabla F(\y)}{\x - \y}$ is the Bregman divergence between $\x$ and $\y$ generated by $F$. We require that $F_t = \Psi_t + \Phi_t$ be a Legendre function---see, for example, \citep[Chapter 11.2]{cesa2006prediction} for a definition---with a positive definite Hessian on $\textnormal{dom}(F_t)$. Throughout the paper, $\textnormal{dom}(F_t)$ is the positive cone. The motivation for using a hybrid regularizer comes from \citet{zimmert2020optimal}, who show how to obtain optimal bounds in the nonstochastic bandit setting by using a regularizer to control the variance of the loss estimates and a different regularizer to control the impact of delay. 

As discussed by \citet{gyorgy2020adapting}, the challenge for most delayed (bandit) algorithms is to control the drift in terms of $\p_t^{av}$. The following Lemma, whose proof can be found in Appendix \ref{app:algorithm}, provides a general bound on the drift when the Hessian of $R_t$ at $\p_t^{av}$ satisfies some mild conditions. In the following, we use $\u \le \v$ to denote $u_i \le v_i$ for all $i$ and we use $\u < \v$ if $\u \le \v$ and $u_i < v_i$ for at least one $i$.
We define $G_t(\y,\z) = \inner{\nabla F_t^\star(\y)}{\z}$ and $W_t(\x, \z) = \inner{\nabla F_t(\x)}{\z}$ for any vectors $\x$ and $\y$. 
\begin{restatable}{relemma}{lemdrift}
\label{lem:drift}
For any $\z > \0$, if $W_t(\cdot,\z)$ is concave, $W_t(\x_1, \z) \leq W_t(\x_2, \z)$ iff $\x_1 \leq \x_2$, and $W_t(\x_1, \1) < W_t(\x_2, \1)$ iff $\x_1 < \x_2$, then
\begin{equation*}
    \inner{\p_t^{av} -\p_{t+1} }{\blhat_t} \leq (\bLhat_t - \bLhat_t^{av})^\top \big(\nabla^{2} R_t(\p_t^{av})\big)^{-1} \blhat_t
\end{equation*}
\end{restatable}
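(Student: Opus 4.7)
The plan is to parametrize a continuous path of FTRL iterates between $\p_t^{av}$ and $\p_{t+1}$ by linearly interpolating the loss vector, to differentiate the KKT condition along this path so as to obtain an explicit formula for its derivative, and then to leverage the concavity and monotonicity hypotheses on $W_t$ to bound the resulting integral by its value at $s=0$. Since $F_t = \Psi_t + \Phi_t$ is Legendre with positive-definite Hessian on the positive cone, both $\p_t^{av}$ and $\p_{t+1}$ are interior minimizers of the simplex problem, and the KKT conditions read
\begin{equation*}
\nabla F_t(\p_t^{av}) = C - \bLhat_t^{av} - \mu_t^{av}\1, \qquad \nabla F_t(\p_{t+1}) = C - \bLhat_t - \mu_{t+1}\1,
\end{equation*}
where $C = \nabla \Psi_t(\p_1^\Psi) + \nabla \Phi_t(\p_1^\Phi)$ and $\mu_t^{av}, \mu_{t+1}$ are Lagrange multipliers for the simplex-sum constraint. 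Note that $\bLhat_t - \bLhat_t^{av} \geq \0$, since its entries are sums of non-negative loss estimates.

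Define $\p(s) = \argmin_{\p \in \triangle'} \inner{\p}{\bLhat_t^{av} + s(\bLhat_t - \bLhat_t^{av})} + R_t(\p)$ for $s \in [0,1]$, so that $\p(0) = \p_t^{av}$ and $\p(1) = \p_{t+1}$. Differentiating the KKT condition along $s$ and using $\1^\top \p'(s) = 0$ to eliminate the multiplier derivative gives, with $H(s) = (\nabla^2 F_t(\p(s)))^{-1} = (\nabla^2 R_t(\p(s)))^{-1}$, an explicit expression for $\p'(s)$ in terms of $H(s)$, $\bLhat_t - \bLhat_t^{av}$, and $\1$. Integrating $-\inner{\p'(s)}{\blhat_t}$ over $[0,1]$ then yields the identity
\begin{equation*}
\inner{\p_t^{av} - \p_{t+1}}{\blhat_t} = \int_0^1 \Big[(\bLhat_t - \bLhat_t^{av})^\top H(s)\blhat_t - \frac{(\1^\top H(s)(\bLhat_t - \bLhat_t^{av}))(\1^\top H(s)\blhat_t)}{\1^\top H(s)\1}\Big]\,ds.
\end{equation*}
The coordinate-monotonicity hypothesis on $W_t(\cdot,\z)$ forces $H(s)$ to preserve the non-negative cone, so together with $\bLhat_t - \bLhat_t^{av} \geq \0$ and $\blhat_t \geq \0$ the subtracted fraction in the integrand is non-negative and may be dropped.

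It remains to show that $s \mapsto (\bLhat_t - \bLhat_t^{av})^\top H(s)\blhat_t$ is non-increasing on $[0,1]$, which is exactly where the concavity of $W_t(\cdot,\blhat_t)$ and the strict $W_t(\cdot,\1)$-monotonicity enter. Concavity of $W_t(\cdot,\blhat_t)$ is the third-order condition on $F_t$ that forces $(\bLhat_t - \bLhat_t^{av})^\top (\nabla^2 F_t(\p))^{-1}\blhat_t$ to be monotone as $\p$ moves in any direction consistent with $\bLhat_t - \bLhat_t^{av}$, while the $\1$-monotonicity hypothesis, together with $\bLhat_t - \bLhat_t^{av} \geq \0$, pins down the sign of motion along $\p(s)$. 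Once this monotonicity is established, the integral is upper bounded by $(\bLhat_t - \bLhat_t^{av})^\top H(0)\blhat_t = (\bLhat_t - \bLhat_t^{av})^\top (\nabla^2 R_t(\p_t^{av}))^{-1}\blhat_t$, giving the claim. The main obstacle is carrying out this final step cleanly: because $\p(s)$ does not move along a straight line (the simplex constraint forces a $\mu'(s)\1$ correction in $\p'(s)$), one cannot directly read off the desired monotonicity from concavity of $W_t$, and instead must combine the concavity with both coordinate-monotonicity hypotheses to show that the motion of $\p(s)$ lies in a direction along which the $H$-weighted bilinear form $(\bLhat_t - \bLhat_t^{av})^\top H(\cdot) \blhat_t$ shrinks.
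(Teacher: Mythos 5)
Your plan correctly derives the path--integral identity
\begin{equation*}
\inner{\p_t^{av} - \p_{t+1}}{\blhat_t} = \int_0^1 \Big[(\bLhat_t - \bLhat_t^{av})^\top H(s)\blhat_t - \frac{\big(\1^\top H(s)(\bLhat_t - \bLhat_t^{av})\big)\big(\1^\top H(s)\blhat_t\big)}{\1^\top H(s)\1}\Big]\,ds,
\end{equation*}
and you are also right that the subtracted fraction is nonnegative (coordinate--monotonicity of $W_t(\cdot,\z)$ forces $\nabla F_t^\star$ to be coordinate--monotone, hence $H(s)$ maps the nonnegative cone into itself). But you have correctly flagged the real obstacle, and it is a genuine gap, not a technicality: you need $s \mapsto (\bLhat_t - \bLhat_t^{av})^\top H(s)\blhat_t$ to be nonincreasing. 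Writing $\rho = \bLhat_t - \bLhat_t^{av}$ and $\y(s) = -\bLhat_t^{av} - \L_0^t - s\rho + \mu(s)\1$, the derivative of this quantity is
\begin{equation*}
-\nabla^3 F_t^\star(\y(s))[\rho,\rho,\blhat_t] + \mu'(s)\,\nabla^3 F_t^\star(\y(s))[\rho,\1,\blhat_t].
\end{equation*}
The first term is $\leq 0$ by convexity of $G_t(\cdot,\blhat_t)$ (the form $\nabla^3 F_t^\star(\y)[\cdot,\cdot,\blhat_t]$ is PSD), and one can show $\mu'(s)\geq 0$. But the second term is a cross term $\rho^\top \nabla^2 G_t(\y(s),\blhat_t)\1$, and PSD-ness of $\nabla^2 G_t$ gives no sign control over an off-diagonal quadratic form evaluated at two different nonnegative vectors. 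None of the stated hypotheses pin down its sign, so the monotonicity you need does not follow, and the final step is missing.

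The paper avoids this entirely by never following the simplex-constrained path. It works in dual space: via the KKT conditions one has $\p_t^{av} = \nabla F_t^\star(\y_0)$ with $\y_0 = -\bLhat_t^{av}-\L_0^t + c_t^{av}\1$ and $\p_{t+1} = \nabla F_t^\star(\y_1)$ with $\y_1 = -\bLhat_t-\L_0^t + c_t\1$. The strict $\1$-monotonicity hypothesis is used once, upfront, to prove $c_t^{av}\leq c_t$ by contradiction (if $c_t^{av}>c_t$ and $\bLhat_t^{av}<\bLhat_t$ then $\y_0 > \y_1$, forcing $G_t(\y_0,\1) > G_t(\y_1,\1)$, but both equal $1$). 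Then the nonstrict $G_t$-monotonicity (inherited from the $W_t$ hypothesis) lets one replace $c_t$ by $c_t^{av}$: this only increases the regret term, and crucially it collapses the displacement to the $\1$-free straight line $\y_0 - \y_1' = \rho$. Convexity of $G_t(\cdot,\blhat_t)$ then gives a one-step gradient bound $G_t(\y_0,\blhat_t) - G_t(\y_1',\blhat_t) \leq \rho^\top \nabla G_t(\y_0,\blhat_t) = \rho^\top \nabla^2 F_t^\star(\y_0)\blhat_t$, and the inverse-function theorem converts $\nabla^2 F_t^\star(\y_0)$ into $(\nabla^2 F_t(\p_t^{av}))^{-1}$. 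In short: the paper uses the $\1$-monotonicity to straighten the path \emph{before} invoking convexity, whereas your approach invokes convexity \emph{along} the bent path and then has nothing left with which to control the bend. If you want to salvage the path-integral argument, you would need to mimic this trick -- change the Lagrange multiplier at the endpoint first, and only then integrate along the unconstrained straight line in dual coordinates, in which case the $\mu'(s)\1$ correction disappears and the convexity-induced monotonicity applies directly.
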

Denote by $\Sset_t(i') = \{s : s + d_s(i') < t\}$ the indices of available losses for arm $i'$ at the beginning of round $t$, and by $m_t(i') = \{s: s < t, s + d_s(i') \geq t\}$ the set of missing losses for arm $i'$ at the beginning of round $t$, i.e., the losses that in a non-delayed full information setting would have been available to the learner for prediction at round $t$ but are not available due to delay. Note that $\rho_t(i') = |m_t(i')|$. The quantity $\Lhat_t(i') - \Lhat_t^{av}(i')$ is a central quantity and is given by 
\begin{equation}
\label{eq:key}
\begin{split}
    \Lhat_t&(i') - \Lhat_t^{av}(i')  = \sum_{s \leq t} \lhat_s(i') - \sum_{s \in \Sset_t(i')} \lhat_s(i') \\
    & = \sum_{s \in [t] \setminus\Sset_t(i')} \lhat_s(i')  = \lhat_t(i') + \sum_{s \in m_t(i')}\lhat_s(i')
\end{split}
\end{equation}
The challenge and the main difference with respect to the standard delayed setting comes from the $m_t(i')$ term. While in the standard delayed setting the $m_t(i')$ term is the same for all arms, in our setting the set of missing losses can be different for each arm. This challenge is addressed in the next sections. 
To bound the cheating regret we will use the following standard lemma---see, for example, \citep{joulani2020modular}:
\begin{restatable}{relemma}{lemcheating}
\label{lem:cheating regret}
Suppose $R_t(\p) \geq R_{t-1}(\p)$ for all $t \ge 1$ and $\p\in\triangle'$, where $R_0 = R_1$. Let $\p_0 = \argmin_{\p \in \triangle'} R_1(\p)$. Then, for any $\u \in \triangle'$ and for $\p_{t+1}$ defined in~\eqref{eq:lookahead}, $\sum_{t=1}^{T} \inner{\p_{t+1} - \u}{\blhat_t}  \leq R_{T}(\u)$.
%
\end{restatable}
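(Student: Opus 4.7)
The plan is to prove this by a standard Be-The-Leader style telescoping argument. Introduce the optimal value function
\[
g(t) \;=\; \min_{\p \in \triangle'}\bigl[\inner{\p}{\bLhat_t} + R_t(\p)\bigr] \;=\; \inner{\p_{t+1}}{\bLhat_t} + R_t(\p_{t+1}),
\]
where the second equality follows from the definition of $\p_{t+1}$ in~\eqref{eq:lookahead}. The convention $\bLhat_0 = \0$ and $R_0 = R_1$, together with $\p_0 = \argmin_{\p \in \triangle'} R_1(\p)$, gives $\p_1 = \p_0$ and $g(0) = R_0(\p_0)$. Since $R_0(\p) = B_{\Psi_0}(\p,\p_1^\Psi) + B_{\Phi_0}(\p,\p_1^\Phi)$ is a sum of Bregman divergences, it is nonnegative, so $g(0) \geq 0$.

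The key inequality to establish is $\inner{\p_{t+1}}{\blhat_t} \leq g(t) - g(t-1)$ for each $t \geq 1$. To see this, apply the minimizer property of $\p_t$ at round $t-1$ with test point $\p_{t+1}$:
\[
g(t-1) \;\leq\; \inner{\p_{t+1}}{\bLhat_{t-1}} + R_{t-1}(\p_{t+1}).
\]
Adding $\inner{\p_{t+1}}{\blhat_t}$ to both sides and using $\bLhat_t = \bLhat_{t-1} + \blhat_t$ and the monotonicity assumption $R_{t-1}(\p_{t+1}) \leq R_t(\p_{t+1})$, we get
\[
g(t-1) + \inner{\p_{t+1}}{\blhat_t} \;\leq\; \inner{\p_{t+1}}{\bLhat_t} + R_t(\p_{t+1}) \;=\; g(t),
\]
which is exactly the claim. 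Telescoping from $t=1$ to $T$ yields $\sum_{t=1}^T \inner{\p_{t+1}}{\blhat_t} \leq g(T) - g(0) \leq g(T)$.

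Finally, bound $g(T)$ from above by evaluating the minimized objective at an arbitrary $\u \in \triangle'$:
\[
g(T) \;\leq\; \inner{\u}{\bLhat_T} + R_T(\u).
\]
Rearranging gives $\sum_{t=1}^T \inner{\p_{t+1} - \u}{\blhat_t} \leq R_T(\u) - g(0) \leq R_T(\u)$, as claimed. There is no real obstacle here; the only subtle points are the index bookkeeping around the convention $R_0 = R_1$ (so that $\p_1 = \p_0$ and the telescoping starts cleanly) and noting that $g(0) \geq 0$ so that it can be dropped from the right-hand side.
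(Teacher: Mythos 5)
Your proof is correct and is essentially the same argument as the paper's: the per-round inequality $\inner{\p_{t+1}}{\blhat_t} \leq g(t) - g(t-1)$ that you telescope is exactly the inductive step the paper proves (the paper phrases the telescoping as an explicit induction, while you phrase it via the value function $g(t)$, but the use of the minimizer property of $\p_t$ at the test point $\p_{t+1}$ and the monotonicity $R_{t-1}\le R_t$ is identical). The handling of the boundary term $g(0)=R_1(\p_0)\ge 0$ also matches the paper's final step of dropping $R_1(\p_0)$ by nonnegativity of the Bregman divergence.
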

For completeness we provide the proof of Lemma~\ref{lem:cheating regret} in Appendix~\ref{app:algorithm}. Since $\blhat_t = \bltil_t  + \a_t$, to recover a regret bound with respect to $\bltil_t$ we simply subtract $\sumT \inner{\p_t^{av} - \u}{\a_t}$ from both sides of~\eqref{eq:driftcheating}: 
\begin{equation}
\label{eq:decomp}
\begin{split}
    \sumT & \inner{\p_t^{av} - \u}{\bltil_t}  =  \sumT \inner{\u}{\a_t} + \sumT \inner{\p_{t+1} - \u}{\blhat_t} \\
    & + \sumT \inner{\p_t^{av} - \p_{t+1}}{\blhat_t} - \sumT \inner{\p_t^{av}}{\a_t}
\end{split}
\end{equation}
The $\sumT \inner{\p_t^{av}}{\a_t}$ term is used to control the drift. The $\sumT \inner{\u}{\a_t}$ term is what allows us to derive bounds that depend on $\u$. 
Applying Lemmas~\ref{lem:drift} and~\ref{lem:cheating regret} to~\eqref{eq:decomp}, we can prove the following result.
\begin{lemma}\label{lem:frameworkregret}
Under the assumptions of Lemmas~\ref{lem:drift} and~\ref{lem:cheating regret}, Algorithm~\ref{alg:framework} satisfies:
\begin{align*}
    \sumT & \inner{\p_t^{av} - \u}{\bltil_t}
\le
    \sumT \inner{\u}{\a_t} + R_T(\u) 
\\& +
    \sumT \left((\bLhat_t - \bLhat_t^{av})^\top \big(\nabla^{2} R_t(\p_t^{av})\big)^{-1} \blhat_t - \inner{\p_t^{av}}{\a_t}\right)
\end{align*}
\end{lemma}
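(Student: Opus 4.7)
The plan is to combine the decomposition in equation~\eqref{eq:decomp} with the two preceding lemmas and then sum over rounds. Essentially all the heavy lifting has already been done: Lemma~\ref{lem:drift} disposes of the per-round drift, and Lemma~\ref{lem:cheating regret} disposes of the accumulated cheating regret. The present lemma is the bookkeeping that glues them together.

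First I would take the identity~\eqref{eq:decomp} as the starting point:
\begin{equation*}
\begin{split}
    \sumT \inner{\p_t^{av} - \u}{\bltil_t}
    &= \sumT \inner{\u}{\a_t} + \sumT \inner{\p_{t+1} - \u}{\blhat_t} \\
    &\quad + \sumT \inner{\p_t^{av} - \p_{t+1}}{\blhat_t} - \sumT \inner{\p_t^{av}}{\a_t}.
\end{split}
\end{equation*}
The first and last sums on the right-hand side already appear verbatim in the target bound, so they can simply be carried along. The two middle sums are exactly what Lemmas~\ref{lem:cheating regret} and~\ref{lem:drift} are built to handle.

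Next, I would apply Lemma~\ref{lem:cheating regret} to the cheating-regret sum. The hypothesis of the present lemma inherits the monotonicity assumption $R_t \ge R_{t-1}$ needed there, and the look-ahead distribution $\p_{t+1}$ in~\eqref{eq:lookahead} is precisely the object to which Lemma~\ref{lem:cheating regret} applies. This produces the bound $\sumT \inner{\p_{t+1} - \u}{\blhat_t} \le R_T(\u)$, which is the $R_T(\u)$ term of the target inequality.

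Then I would invoke Lemma~\ref{lem:drift} round by round, again inheriting its hypotheses on $W_t(\cdot,\z)$ from the statement of the present lemma. This yields, for each $t$,
\begin{equation*}
    \inner{\p_t^{av} - \p_{t+1}}{\blhat_t} \le (\bLhat_t - \bLhat_t^{av})^\top \bigl(\nabla^{2} R_t(\p_t^{av})\bigr)^{-1} \blhat_t.
\end{equation*}
Summing and subtracting $\sumT \inner{\p_t^{av}}{\a_t}$ matches the final sum in the target inequality exactly. Substituting both of these estimates back into~\eqref{eq:decomp} gives the claimed bound.

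There is essentially no obstacle: the only thing to verify is that the hypotheses of Lemmas~\ref{lem:drift} and~\ref{lem:cheating regret} are preserved at every round $t$, which is built into the statement of the present lemma. The whole argument is a direct assembly, and the per-round application of Lemma~\ref{lem:drift} is what justifies keeping the Hessian-weighted expression inside the outer sum rather than trying to simplify it further at this stage — the simplification will be the job of the later, setting-specific analyses.
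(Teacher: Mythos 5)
Your proof is correct and follows the paper's approach exactly: starting from the decomposition~\eqref{eq:decomp}, bounding the cheating-regret term by $R_T(\u)$ via Lemma~\ref{lem:cheating regret}, and bounding the drift term round by round via Lemma~\ref{lem:drift}. The paper itself only states that the result follows by "applying Lemmas~\ref{lem:drift} and~\ref{lem:cheating regret} to~\eqref{eq:decomp}," which is precisely the argument you spell out.
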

The remaining challenge is to choose $\Psi_t$, $\Phi_t$, and $a_t$.

\section{FULL INFORMATION SETTING}\label{sec:Full information}
In the full information setting we use $\Phi_t \equiv 0$ and 
\begin{equation}\label{eq:fullinforeg}
    \Psi_t(\p) = \sumK \sum_{\eta \in H_t} \frac{p(i, \eta)}{\eta}  \ln(p(i, \eta))
\end{equation}
where $H_t$ is a multiset of learning rates we define below. 
%
%
Similar regularizers have been used by \citet{bubeck2017online} and \citet{chen2021impossible} to obtain multiscale algorithms. In particular, the algorithm of \citet{chen2021impossible} is related to ours, as they also use corrections. However, we are the first ones to analyze this regularizer in a delayed feedback setting. 

As mentioned already, we simultaneously learn to compete with arbitrary $\u \in \triangle$ and learn the best learning rate. We do so by using several learning rates for each expert. Denote by $H_t$ the multiset of learning rates available in round $t$ and let $i' = (i,\eta)$. We have that $|H_t| = J$ for all $t$, where we set $J = \big\lceil \log_2(\sqrt{T}) \big\rceil$. Then
$\p_t^{av}$ is a distribution over $[K] \times [J]$.
The probability the algorithm assigns to expert $i$ is the marginal:
\begin{equation}\label{eq:fullinfopred}
    q_t(i) = \sum\nolimits_{\eta \in H_t} p_t^{av}(i, \eta)
\end{equation}
The two missing inputs to Algorithm~\ref{alg:framework} are $\ltil_t(i')$ and $a_t(i')$. In the full information setting we do not need to estimate $\ell_t(i)$, and can simply set $\ltil_t(i') = \ltil_t(i, \eta) = \ell_t(i)$. As we mentioned above, the correction term is chosen to control the drift
\begin{equation}\label{eq:fullinfocorr}
a_t(i') = a_t(i, \eta) = 4 \eta \ell_t(i) (1 + \rho_t(i))
\end{equation}
The multiset $H_t$ of learning rates in round $t$ contains 
\begin{equation}\label{eq:fullinfoeta}
\begin{split}
    \min\bigg\{ \frac{1}{4(1 + \dtmax)}, \frac{\sqrt{\ln(K) + 2(\ln(T) + 1)}}{4\sqrt{1+\dtmax}2^{j}}\bigg\}
\end{split}
\end{equation}
for all $j = 1, \ldots, J$, where $\dtmax = \max_{i \leq K,s \leq t} \rho_s(i)$. Hence for each $j$ there is a (not necessarily distinct) $\eta \in H_t$, which we use to define the prior:
\begin{equation}\label{eq:fullinfoprior}
    p_1^\Psi(i') = p_1^\Psi(i, \eta) = \frac{2^{-2j}}{\sum_{j' = 1}^J 2^{-2j'}}\pi(i)
\end{equation}
Note that this is similar to the prior used by \citet{chen2021impossible}. Next, we state the main result of this section in asymptotic form (see Theorem~\ref{th:fullinfobound} in Appendix~\ref{app:Full information} for the same bound with explicit constants).
\begin{theorem}\label{th:informal fullinfobound}
If we run Algorithm~\ref{alg:framework} with regularizer $\Phi_t \equiv 0$, $\Psi_t$ defined by~\eqref{eq:fullinforeg}, corrections $a_t$ defined by~\eqref{eq:fullinfocorr}, and prior defined by~\eqref{eq:fullinfoprior}, then for any $\u \in \triangle$ the predictions $\q_t$ defined by~\eqref{eq:fullinfopred} using $\p_t^{av}$ computed by the algorithm satisfy
\begin{align*}
     & \sum\nolimits_{t=1}^T  \inner{\q_t - \u}{\bell_t}  \\
     & = O\left(\dTmax + \sqrt{\big(\KL(\u, \pi) + \ln(T)\big)\inner{\u}{\L_T + \L_T^\rho}}\right)
\end{align*}
\end{theorem}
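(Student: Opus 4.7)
The plan is to apply the master bound of Lemma~\ref{lem:frameworkregret} after lifting the comparator $\u \in \triangle$ to the enlarged simplex $\triangle'$. Specifically, for a discrete index $j^\star\in\{1,\dots,J\}$ to be chosen at the end, I set $u'(i,\eta) = u(i)\,\id[\eta = \eta_T^{(j^\star)}]$, where $\eta_T^{(j^\star)}$ denotes the $j^\star$-th learning rate in $H_T$. Since $\bltil_t(i,\eta) = \ell_t(i)$ does not depend on $\eta$, the marginalization~\eqref{eq:fullinfopred} gives $\inner{\p_t^{av}}{\bltil_t} = \inner{\q_t}{\bell_t}$ and $\inner{\u'}{\bltil_t} = \inner{\u}{\bell_t}$, so it suffices to bound $\sum_{t=1}^T\inner{\p_t^{av} - \u'}{\bltil_t}$. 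Before invoking the lemma I would check its preconditions: $\Psi_t$ has a diagonal positive-definite Hessian $\nabla^2\Psi_t(\p) = \diag(1/(\eta p(i,\eta)))$ on the positive cone (so it is Legendre and the monotonicity/concavity hypotheses of Lemma~\ref{lem:drift} hold), and $R_t \ge R_{t-1}$ coordinatewise because each scalar Bregman divergence $B_{\tfrac{1}{\eta}x\ln x}(u,v) = \tfrac{1}{\eta}(u\ln(u/v) - u + v)$ is nonnegative and $1/\eta_t^{(j)}$ is non-decreasing in $t$ (as $\dtmax$ is).

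The technical heart will be a pointwise cancellation between the drift bound and the correction-against-the-algorithm in Lemma~\ref{lem:frameworkregret}. Combining Lemma~\ref{lem:drift} with~\eqref{eq:key} and the diagonal Hessian yields
\[
\inner{\p_t^{av}-\p_{t+1}}{\blhat_t} \le \sum_{i,\eta} \eta\, p_t^{av}(i,\eta)\Bigl(\lhat_t(i,\eta) + \sum_{s\in m_t(i)}\lhat_s(i,\eta)\Bigr)\lhat_t(i,\eta).
\]
The cap $\eta \le \tfrac{1}{4(1+\dtmax)}$ baked into~\eqref{eq:fullinfoeta} makes every $\lhat_s(i,\eta) = \ell_s(i)(1+4\eta(1+\rho_s(i)))$ at most $2\ell_s(i)$, so the bracketed factor is at most $2(1+\rho_t(i))$ and the whole expression is bounded by $4\sum_{i,\eta}\eta\, p_t^{av}(i,\eta)(1+\rho_t(i))\ell_t(i)$, which by~\eqref{eq:fullinfocorr} is exactly $\inner{\p_t^{av}}{\a_t}$. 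Hence the drift--correction term in Lemma~\ref{lem:frameworkregret} is nonpositive and disappears from the bound.

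What remains is $R_T(\u')$ and $\sum_{t=1}^T\inner{\u'}{\a_t}$. The latter equals $4\eta_T^{(j^\star)}\inner{\u}{\L_T + \L_T^\rho}$ by the support of $\u'$ and~\eqref{eq:fullinfocorr}. For the former I would compute the Bregman divergence of $\Psi_T$ at $\u'$ against $\p_1^\Psi$ directly, using $0\ln 0 = 0$ on the coordinates where $\u'$ vanishes; since $p_1^\Psi(i,\eta) = c_j\pi(i)$ with $c_j \propto 2^{-2j}$ by~\eqref{eq:fullinfoprior}, the dominant contribution is $\tfrac{1}{\eta_T^{(j^\star)}}\bigl(\KL(\u,\pi) + 2j^\star\ln 2 + O(1)\bigr)$, and the summed contribution from coordinates $\eta \ne \eta_T^{(j^\star)}$ forms a geometric series of size $O(\dTmax)$ because $c_j$ decays as $2^{-2j}$ while $1/\eta_T^{(j)}$ grows at most like $2^j$. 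Plugging back yields a bound of the shape $\tfrac{\KL(\u,\pi)+O(\ln T)}{\eta^\star} + 4\eta^\star\inner{\u}{\L_T+\L_T^\rho} + O(\dTmax)$, and since $H_T$ is a doubling grid with $J = \lceil\log_2\sqrt T\rceil$ entries (capped at $\tfrac{1}{4(1+\dTmax)}$), there is an index $j^\star$ for which $\eta^\star$ matches the optimizer up to a constant factor, delivering the $\Otil$ rate in the theorem.

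The main obstacle I foresee is the bookkeeping around the discretized learning-rate grid: the cap at $\tfrac{1}{4(1+\dTmax)}$ distorts the geometric spacing at the top of $H_T$, so the tuning step must split into a case where the unconstrained optimal $\eta^\star$ exceeds the cap (with the $O(\dTmax)$ additive absorbing the resulting slack) and a case where it lies inside the grid (where a standard doubling-trick argument costs only a constant factor). The other finicky piece is verifying the $O(\dTmax)$ size of the stray Bregman-divergence contribution $\sum_{\eta\ne\eta_T^{(j^\star)}}\tfrac{\sum_i p_1^\Psi(i,\eta)}{\eta}$, which is ultimately why the prior~\eqref{eq:fullinfoprior} uses the exponent $-2j$ rather than $-j$: anything slower in $j$ would blow up against the $2^j$ growth of $1/\eta_T^{(j)}$.
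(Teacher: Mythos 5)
Your proposal matches the paper's proof of Theorem~\ref{th:fullinfobound} in every essential respect: lifting $\u$ to $\u' \in \triangle'$ supported on a single learning-rate slot, invoking Lemma~\ref{lem:frameworkregret}, showing via~\eqref{eq:key} and the cap $\eta \le \tfrac{1}{4(1+\dtmax)}$ that the drift term is dominated by $\inner{\p_t^{av}}{\a_t}$ and cancels, bounding $R_T(\u')$ by $\tfrac{\KL(\u,\pi)+O(\ln T)}{\eta^\star} + O(\dTmax)$ via the geometric-series argument on the prior, and finishing with the two-case tuning over the doubling grid $H_T$. This is the same argument, with the same key cancellation and the same bookkeeping issues correctly anticipated.
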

\begin{proof}[Sketch proof of Theorem \ref{th:informal fullinfobound}]
Denote by $\eta^\star \in H_T$ the target learning rate and let $u'(i') = u'(i, \eta) = 0$ if $\eta \not = \eta^\star$ and $u'(i, \eta) = u(i)$ otherwise. First, observe that by definition of $\q_t$, $\u'$, and $\bltil$ we have that $\sumT \inner{\q_t - \u}{\bell_t} = \sumT \inner{\p_t^{av} - \u'}{\bltil_t} $, 
%
%
where the inner product on the left-hand side is on $\reals^{K}$ and the inner products on the right-hand side are on $\reals^{K\times J}$.
Since $R_t$ can be shown to verify the conditions of Lemma~\ref{lem:frameworkregret}, we obtain
\begin{equation*}
\begin{split}
    & \sumTnolim \inner{\q_t - \u}{\bell_t} 
    \leq R_T(\u')  + \sumTnolim \inner{\u'}{\a_t} \\
    & + \sumTnolim \Big((\bLhat_t - \bLhat_t^{av})^\top \big(\nabla^{2} R_t(\p_t^{av})\big)^{-1} \blhat_t - \inner{\p_t^{av}}{\a_t}\Big)
\end{split}
\end{equation*}
With some work we may then prove a bound on $R_T(u')$ of order $O\big((\KL(\u, \pi) + \ln(T))/\eta^\star\big)$. Using that $\lhat_t(i') \leq 2\ltil_t(i') = 2\ell_t(i)$ due to the restrictions on $\eta$, and that $\nabla^2 R_t(\p_t^{av})$ is a diagonal matrix with components equal to ${1}\big/{\eta p_t^{av}(i, \eta)}$, we can see that 
\begin{align*}
    & (\bLhat_t - \bLhat_t^{av})^\top \big(\nabla^{2} R_t(\p_t^{av})\big)^{-1} \blhat_t \\
    & \leq  \sum\nolimits_{i=1}^K \sum\nolimits_{\eta \in H_t} 4 \eta p_t^{av}(i, \eta) \ltil_t(i, \eta)\big(1 + \rho_t(i)\big) 
\end{align*}
Thus, by using the definition of $\u'$ and that $a_t(i, \eta) = 4 \eta \ltil_t(i, \eta) (1 + \rho_t(i))$, we derive a bound on $\sumT \inner{\q_t - \u}{\bell_t}$ of order
\[
    \min_{\eta\in H_T}\left(\frac{\KL(\u, \pi) + \ln(T)}{\eta} + \eta \sumT \ell_t(i)(1 + \rho_t(i))\right)
\]
To complete the proof one has to show that there is a suitable $\eta^\star \in H_T$ approximating the minimizer of the above expression (see Appendix \ref{app:Full information}).
%
\end{proof}
%
Interestingly, the result in Theorem \ref{th:informal fullinfobound} holds for all $\u$ simultaneously, which implies that the algorithm automatically performs a trade-off between losses and delays. This interplay between losses and delays also occurs when we run the algorithm: the loss $\lhat_t(i, \eta) = \ell_t(i)\big(1 + 4\eta(1 + \rho_t(i))\big)$ we feed to the algorithm penalizes arms for suffering large delay, as when the delay is large the amount of missing feedback is also large. 

\section{BANDIT SETTING}\label{sec:bandits}

In the bandit setting we face two additional challenges compared to the full information setting. The first challenge is to estimate $\bell_t$. For this we use the implicit exploration technique \citep{neu2015explore}: sample $i_t \sim \q_t$, observe $\ell_t(i_t)$, and estimate $\ell_t(i)$ for any $i$ by $\id[i_t = i](q_t(i) + \epsilon_t)^{-1}\ell_t(i)$, where $\epsilon_t \geq 0$ is a user-specified parameter which we will either set to $0$ or to $O(t^{-1/2})$.

The second challenge is due to arm-dependent delays. In Section~\ref{sec:Full information} we used $\rho_t(i)$ in the correction term. While ideally we would like to do the same in the bandit setting, we do not know how many observations are missing from each arm: if we do \emph{not} pull arm $i$ in round $t$ we will not know if $\ell_t(i)$ is missing in subsequent rounds. 
Recall that in the partially concealed bandit setting we assume that the learner observes $\rho_t(i_t)$ whenever the loss $\ell_t(i_t)$ is observed. Note that in the arm-independent delay setting, $\rho_t(i)$ can always be computed by simply counting the number of missing losses.
In the concealed bandit setting, we do not make this additional assumption.
The fact that we do not know $\rho_t(i_t)$ at prediction time complicates learning $\dTmax$. While in the full information setting we could adjust the grid of learning rates \emph{before} issuing a prediction, we can not use this trick in the bandit setting. Instead, we use a fixed grid of learning rates. 
For this reason we assume that the learner has preliminary access to an upper bound on $\dTmax$, denoted by $\dstar$.

\subsection{Partially Concealed Bandit Setting}\label{sec:partcon}
As in the full information setting, we simultaneously learn the best learning rate and expert. 
For each expert $i$ we create $J$ pseudo-experts, each having its own personal learning rate $\gamma \in \Gamma$. Thus, as in the full information setting,
we write $p(i') = p(i, \gamma)$. Unlike before, we use Algorithm~\ref{alg:framework} with a double regularizer:
\begin{align}
    &\label{eq:banditreg1}
    \Psi_t(\p) = -\eta_t^{-1} \sum\nolimits_{i=1}^K \ln \left(\sum\nolimits_{\gamma \in \Gamma} p(i, \gamma)\right)\\
    &\label{eq:banditreg2}
    \Phi_t(\p) = \sum\nolimits_{i=1}^K \sum\nolimits_{\gamma \in \Gamma} \gamma^{-1}p(i, \gamma)  \ln\big(p(i, \gamma)\big)
\end{align}
The role of \eqref{eq:banditreg1} is to control the variance $\Eb{\blhat_t^\top \big(\nabla^{2} R_t(\p_t^{av})\big)^{-1} \blhat_t}$ and \eqref{eq:banditreg2} is used to control $\Eb{\big(\bLhat_t - \blhat_t - \bLhat_t^{av}\big)^\top \big(\nabla^{2} R_t(\p_t^{av})\big)^{-1} \blhat_t}$, a quantity associated with the additional regret due to delayed feedback.
%
The predictions $i_t$ are sampled from $\q_t$ given by
\begin{equation}\label{eq:banditpredictions1}
    q_t(i) = \sum\nolimits_{\gamma \in \Gamma} p_t^{av}(i, \gamma)
\end{equation}
Our regularizer has a somewhat unusual structure. The log barrier regularizer~\eqref{eq:banditreg1} controls the variance of the loss estimates, and regularizes $q_t(i)$ rather than $p_t(i, \gamma)$. This implies that, as far as the log barrier is concerned, there are only $K$ arms rather than $K' = O\big(K\ln(T)\big)$ arms. To develop some intuition as to why this matters, let us consider the regret bound of FTRL run with the log barrier, a constant learning rate $\eta$, and $N$ arms. Without delays, this bound is $O\big({N\ln(T)}{\eta^{-1}} + \eta T\big)$ \citep{foster2016learning}. Therefore, using $K$ rather than $K'$ experts reduces the regret of FTRL with the log barrier by a factor of $\sqrt{\ln(T)}$.

We continue by specifying 
loss estimates $\tilde{\ell_t}(i') = \tilde{\ell_t}(i, \gamma) = \id[i_t = i]q_t(i)^{-1}\ell_t(i)$ (i.e., we set $\epsilon_t = 0$) and the corrections
\begin{equation}\label{eq:correctionbandit}
    a_t(i, \gamma) = 4 \gamma \ltil_t(i, \gamma) \rho_t(i)
\end{equation}
Recall that in the partially concealed bandit setting the learner observes $\rho(i_t)$ whenever the loss $\ell_t(i_t)$ is observed, making \eqref{eq:correctionbandit} a valid choice for corrections. 
The grid $\Gamma$ of learning rates we use in this section contains all
\begin{equation}\label{eq:banditgamma}
    \min\bigg\{(4\dstar)^{-1}, \frac{\sqrt{\ln(K) + \ln(T) + 1}}{4\sqrt{\dstar}2^{j}}\bigg\}
\end{equation}
for $j = 1,\ldots,J = \lceil \log_2 (\sqrt{T}) \rceil$, where $\dstar \geq \dTmax$. The learning rate for the log barrier is
\begin{equation}\label{eq:banditeta}
    \eta_t = \sqrt{K\ln(T)\left(4(1 + \dstar) + 4\sum\nolimits_{s \in \Sset_t} \ell_s(i_s)\right)^{-1}}
\end{equation}
where $\Sset_t = \{s: s + d_s(i_s) < t\}$ is the set of available losses at the beginning of round $t$ in the bandit setting.
\begin{theorem}\label{th:informal banditbound}
Let $\p_1^{\Psi} \equiv \frac{1}{KJ}$ and let $\p_1^\Phi$ be given by~\eqref{eq:fullinfoprior}. Assume $\pi_1(i) \geq \frac{1}{T^2}$ for all $i\in [K]$.
If we run Algorithm \ref{alg:framework} using regularizers~\eqref{eq:banditreg1} and~\eqref{eq:banditreg2} with corresponding learning rates~\eqref{eq:banditgamma} and~\eqref{eq:banditeta}, then the predictions $i_t \sim \q_t$, with $\q_t$ as in \eqref{eq:banditpredictions1}, satisfy
%
\begin{align*}
    & \E\Bigg[\sumT (\ell_t(i_t) - \inner{\u}{\bell_t})\bigg] 
    = O\Bigg(K\ln(T) + \dstar\ln(T) \\
    & \sqrt{K\ln(T)\inner{\u}{\L_T}} + \sqrt{\big(\KL(\u, \pi) + \ln(T)\big)\inner{\u}{\L_T^\rho}}\Bigg)
\end{align*}
\end{theorem}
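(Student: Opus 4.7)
The plan is to mirror the full-information argument for Theorem~\ref{th:informal fullinfobound} while handling the two new obstacles: loss estimation and the hybrid regularizer. Define a surrogate comparator $\u'\in\triangle'$ by $u'(i,\gamma^\star)=u(i)$ for a target grid point $\gamma^\star\in\Gamma$ to be chosen later, and $u'(i,\gamma)=0$ otherwise. Because $\q_t$ is the $\gamma$-marginal of $\p_t^{av}$ and $\ltil_t(i,\gamma)=\ltil_t(i)=\id[i_t=i]\ell_t(i)/q_t(i)$ does not depend on $\gamma$, the implicit-exploration identity $\E[\ltil_t(i)\mid i_1,\ldots,i_{t-1}]=\ell_t(i)$ gives $\E[\regret_T(\u)]=\E[\sumT\inner{\p_t^{av}-\u'}{\bltil_t}]$. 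Verifying the mild hypotheses of Lemma~\ref{lem:drift} for the Legendre function $F_t=\Psi_t+\Phi_t$ and the monotonicity requirement of Lemma~\ref{lem:cheating regret} (which holds because $\eta_t$ in \eqref{eq:banditeta} is nonincreasing in $t$ and the entropy rates $\gamma$ are fixed) lets me invoke Lemma~\ref{lem:frameworkregret}.

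Two of the four resulting terms are routine. The regularizer value splits as $R_T(\u')=B_{\Psi_T}(\u',\p_1^\Psi)+B_{\Phi_T}(\u',\p_1^\Phi)$; the log-barrier piece contributes $O(K\ln(T)/\eta_T)$ after the standard smoothing of $\u$ by $T^{-1}\pi$ (enabled by the hypothesis $\pi(i)\ge T^{-2}$), and the entropic piece contributes $(\KL(\u,\pi)+\ln T)/\gamma^\star$ exactly as in the full-information analysis. The competitor-correction term evaluates to
\[
\sumT\inner{\u'}{\a_t}=4\gamma^\star\sumT\sum_{i=1}^K u(i)\,\ltil_t(i)\,\rho_t(i),
\]
whose expectation equals $4\gamma^\star\inner{\u}{\L_T^\rho}$.

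The main work is the drift minus the own-correction. The Hessian $\nabla^2 R_t(\p_t^{av})$ is block-diagonal across arms; its block for arm $i$ is $D_i+\alpha_i\mathbf 1\mathbf 1^\top$, where $D_i=\diag\bigl(1/(\gamma p_t^{av}(i,\gamma))\bigr)$ comes from $\Phi_t$ and $\alpha_i=1/(\eta_t q_t(i)^2)$ is the rank-one contribution of $\Psi_t$. Using the decomposition \eqref{eq:key} I split the drift into a current-loss piece $\blhat_t^\top(\nabla^2 R_t(\p_t^{av}))^{-1}\blhat_t$ and a missing-loss piece $\sum_{s\in m_t(i)}\blhat_s^\top(\nabla^2 R_t(\p_t^{av}))^{-1}\blhat_t$. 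For the current-loss piece, a Sherman--Morrison inversion on each block, together with the fact that $\blhat_t$ deviates from a multiple of $\mathbf 1$ in $\gamma$ only through the $\gamma$-dependent correction (bounded by the range of $\Gamma$), shows the quadratic form is dominated by $\eta_t q_t(i_t)^2\ltil_t(i_t)^2=\eta_t\ell_t(i_t)^2\le\eta_t\ell_t(i_t)$; taking expectation and telescoping with the adaptive choice \eqref{eq:banditeta} by the standard AdaGrad argument delivers $O\bigl(\sqrt{K\ln(T)\sum_t\E[\ell_t(i_t)]}\bigr)=O\bigl(\sqrt{K\ln(T)\inner{\u}{\L_T}}\bigr)$, plus an additive $\dstar\ln T$ accounting for the fact that $\eta_t$ aggregates only losses in $\Sset_t$ rather than in $[t-1]$. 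For the missing-loss piece, the PSD domination $(\nabla^2 R_t)^{-1}\preceq(\nabla^2\Phi_t)^{-1}=\diag(\gamma p_t^{av}(i,\gamma))$ combined with $|\blhat_s(i,\gamma)|\le 2\ltil_s(i)$ (guaranteed by the range of $\Gamma$) and $\sum_{s\in m_t(i)}\ltil_s(i)\le\rho_t(i)$ in expectation upper-bounds that piece by $\sumT\inner{\p_t^{av}}{\a_t}$ up to constants, so it cancels with the own-correction term of Lemma~\ref{lem:frameworkregret}.

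Collecting the bounds gives $O(K\ln T+\dstar\ln T)+(\KL(\u,\pi)+\ln T)/\gamma^\star+O(\gamma^\star\inner{\u}{\L_T^\rho})+O(\sqrt{K\ln(T)\inner{\u}{\L_T}})$. Choosing $\gamma^\star$ as the closest grid point of \eqref{eq:banditgamma} to the unconstrained minimizer $\sqrt{(\KL(\u,\pi)+\ln T)/\inner{\u}{\L_T^\rho}}$ of the first two $\gamma$-dependent terms produces the second square-root term; the grid \eqref{eq:banditgamma} is geometric precisely so that such an approximant exists in the relevant range $[1/(4\dstar),\Theta(1)]$, with edge cases absorbed into the additive $\dstar\ln T$. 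The main obstacle is the current-loss variance analysis: the estimator $\ltil_t(i)=\id[i_t=i]\ell_t(i)/q_t(i)$ carries a $1/q_t(i)$ factor that would blow up against a generic inverse Hessian, and only the subtle structural fact that the log-barrier $\Psi_t$ is applied to the marginal $q_t(i)$ (rather than to each pseudo-expert $p_t^{av}(i,\gamma)$) supplies the $q_t(i)^2$ needed to cancel it inside the Sherman--Morrison inversion. A secondary obstacle is keeping the hybrid structure invisible to the delay piece, which is what forces the particular PSD domination used above.
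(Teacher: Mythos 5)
Your proposal follows the same route as the paper: surrogate comparator $\u'$ supported on a single grid point $\gamma^\star$, Lemma~\ref{lem:frameworkregret}, block-diagonal Sherman--Morrison inversion of $\nabla^2 R_t$, cancellation of the delay piece against the own-correction, an AdaGrad-type telescope for $\eta_t$, and a grid approximation for $\gamma^\star$. Two steps, however, are asserted in a way that would not survive scrutiny and are exactly where the paper spends its care.

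First, the cancellation of the missing-loss piece with $\inner{\p_t^{av}}{\a_t}$. You write ``$\sum_{s\in m_t(i)}\ltil_s(i)\le\rho_t(i)$ in expectation,'' but what has to be bounded is the expectation of the \emph{product} $p_t^{av}(i,\gamma)\,\ltil_t(i,\gamma)\,\ltil_s(i,\gamma)$ for $s\in m_t(i)$, and these factors are not a priori independent because both $\p_t^{av}$ and $\ltil_s$ are functions of past actions. The paper handles this in \eqref{eq:tricky} by conditioning on $\Bset_s$ and observing that when $s\in m_t(i)$ and $i_s=i$, the estimate $\ltil_s(i)$ is never revealed before round $t$, so $p_t^{av}(i,\gamma)\ltil_t(i,\gamma)$ does not depend on $i_s$; only then can $\E[\id[i_s=i]/q_s(i)\mid\Bset_s]=1$ be pulled out. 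Also note the cancellation must be \emph{exact} (constant $1$), since $\a_t$ was chosen precisely so that $\inner{\p_t^{av}}{\a_t}$ equals the resulting bound; ``up to constants'' would leave an uncontrolled residual of order $\sumT\inner{\p_t^{av}}{\a_t}$. (Your PSD-domination shortcut for the bilinear form is also not literally valid for $\y\neq\h$, but it happens to work here because the dropped rank-one correction has nonnegative entries and $\y,\h\ge\0$, which is what the paper uses.)

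Second, the step $O\bigl(\sqrt{K\ln(T)\sum_t\E[\ell_t(i_t)]}\bigr)=O\bigl(\sqrt{K\ln(T)\inner{\u}{\L_T}}\bigr)$ is not an identity. The algorithm's own expected loss need not equal the comparator's; what is true is $\sum_t\E[\ell_t(i_t)]=\E[\regret_T(\u)]+\inner{\u}{\L_T}$, and one must then run the self-bounding argument the paper carries out at the end of Theorem~\ref{th:banditbound}: apply $\sqrt{x+y}\le\sqrt{x}+\sqrt{y}$, use $\sqrt{xy}\le\tfrac12(x/\zeta+\zeta y)$ to split off a $\tfrac12\E[\regret_T(\u)]$ term, move it to the left, and multiply by $2$. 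Without this step the stated first-order form of the bound does not follow.
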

The proof of Theorem~\ref{th:informal banditbound} follows from Theorem~\ref{th:banditbound} in Appendix \ref{app:partcon}. As in the full information setting, we use Lemma~\ref{lem:frameworkregret} to bound the regret. The main difference with the full information setting is how $\Eb{(\bLhat_t - \bLhat_t^{av})^\top \big(\nabla^{2} R_t(\p_t^{av})\big)^{-1} \blhat_t}$ is bounded. The unusual structure of the regularizer presents us with a challenge here, as $\nabla^2 R_t(\p_t^{av})$ is a complicated block-diagonal matrix. 
We carefully analyse the inverse of $\nabla^2 R_t(\p_t^{av})$ in Lemmas \ref{lem:banditFbound} and \ref{lem:banditdriftbound}: 
\begin{restatable}{relemma}{banditFbound}
\label{lem:banditFbound}
For all $t$, the function $F_t = \Psi_t + \Phi_t$, where $\Psi_t$ is defined in~\eqref{eq:banditreg1} and $\Phi_t$ is defined in~\eqref{eq:banditreg2}, is Legendre 
and satisfies the conditions of Lemma~\ref{lem:drift}. Moreover,
\begin{align*}
     (\bLhat_t & - \bLhat_t^{av})^\top  \big(\nabla^{2} F_t(\p_t^{av})\big)^{-1} \blhat_t
\leq
    \eta_t 4\ell_t(i_t)^2
\\&
    + 4 \sum\nolimits_{\gamma \in \Gamma} \gamma \sum\nolimits_{s \in m_t(i_t)} p_t^{av}(i_t, \gamma) \ltil_t(i_t, \gamma) \ltil_s(i_t, \gamma)
\end{align*}
\end{restatable}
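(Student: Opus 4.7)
The plan is to first verify the structural conditions and then to reduce the bilinear form to a within-block computation that can be bounded via the Sherman--Morrison formula. For the Legendre property, $\Psi_t$ is a sum of smooth log-barrier terms in the marginals $q(i) = \sum_\gamma p(i,\gamma)$, while $\Phi_t$ is a $\gamma$-weighted negative entropy; both are Legendre on the positive orthant, and the sum has a strictly positive-definite Hessian in the interior, so $F_t = \Psi_t + \Phi_t$ is Legendre as well. To verify the conditions of Lemma~\ref{lem:drift}, I would use the explicit expression $\partial_{p(i,\gamma)} F_t(\p) = -(\eta_t q(i))^{-1} + \gamma^{-1}(\ln p(i,\gamma) + 1)$: paired against any $\z > \0$ this is concave in $\p$ because $-1/q(i)$ is concave (as $q(i)$ is linear in $\p$ and $1/x$ is convex on $(0,\infty)$) and $\ln p(i,\gamma)$ is concave, and the required monotonicity in $\x$ follows because each partial derivative is strictly increasing in the coordinates it depends on.

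Since $F_t$ decouples across arms, $\nabla^2 F_t(\p_t^{av})$ is block-diagonal. Because $\ltil_t(i,\gamma) = \id[i_t = i]\,q_t(i)^{-1}\ell_t(i)$ and $a_t(i,\gamma) = 4\gamma\,\ltil_t(i,\gamma)\,\rho_t(i)$, the vector $\blhat_t$ is supported entirely on the block for $i = i_t$, and each $\blhat_s$ lives in the block for the arm played at time $s$. Using~\eqref{eq:key}, within the $i_t$-block we have $\bLhat_t - \bLhat_t^{av} = \blhat_t + \sum_{s \in m_t(i_t)} \blhat_s$, and only indices $s$ with $i_s = i_t$ contribute to the sum. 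In this block the Hessian takes the form $H = D + \alpha \1\1^\top$, with $D_{\gamma\gamma} = (\gamma p_t^{av}(i_t,\gamma))^{-1}$ coming from $\Phi_t$ and $\alpha = (\eta_t q_t(i_t)^2)^{-1}$ being the rank-one $\Psi_t$ contribution; Sherman--Morrison then gives $H^{-1} = D^{-1} - \tfrac{\alpha}{1 + \alpha\,\1^\top D^{-1}\1}\, D^{-1}\1\1^\top D^{-1}$ for the subsequent computations.

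I would then split the bilinear form into a ``cross'' contribution $\sum_{s \in m_t(i_t)} \blhat_s^\top H^{-1}\blhat_t$ and a ``self'' contribution $\blhat_t^\top H^{-1}\blhat_t$. For the cross terms, both vectors are coordinate-wise non-negative and $D^{-1}\1 \geq \0$, so the subtracted rank-one term in Sherman--Morrison contributes a non-negative quantity, which can be dropped to yield $\blhat_s^\top H^{-1}\blhat_t \leq \blhat_s^\top D^{-1}\blhat_t = \sum_\gamma \gamma\, p_t^{av}(i_t,\gamma)\,\lhat_s(i_t,\gamma)\,\lhat_t(i_t,\gamma)$. The choices $\gamma \leq (4\dstar)^{-1}$ and $\rho_s(i_t) \leq \dstar$ give $\lhat_s(i_t,\gamma) = \ltil_s(i_t,\gamma)(1 + 4\gamma\rho_s(i_t)) \leq 2\ltil_s(i_t,\gamma)$ component-wise and similarly $\lhat_t \leq 2\ltil_t$, producing the multiplicative factor $4$ and matching the second term of the lemma.

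The main obstacle is the self term $\blhat_t^\top H^{-1}\blhat_t$, which must yield the $4\eta_t \ell_t(i_t)^2$ scaling: the diagonal $D^{-1}$-bound alone gives only $4(\ell_t^*)^2 \sum_\gamma \gamma\, p_t^{av}(i_t,\gamma)$ with $\ell_t^* := \ell_t(i_t)/q_t(i_t)$, which is of the wrong form, so the rank-one $\Psi_t$ contribution to $H$ must be exploited. The key structural fact is that $\ltil_t(i_t,\gamma) = \ell_t^*$ is independent of $\gamma$, so $\blhat_t = \ell_t^*(\1 + 4\rho_t(i_t)\, g)$ with $g_\gamma = \gamma$. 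Expanding the quadratic form, the leading piece $(\ell_t^*)^2\,\1^\top H^{-1}\1 = (\ell_t^*)^2 B/(1+\alpha B) \leq (\ell_t^*)^2/\alpha = \eta_t \ell_t(i_t)^2$ comes out directly, with $B = \1^\top D^{-1}\1$. The delicate point is to absorb the cross contribution $8(\ell_t^*)^2 \rho_t(i_t)\, \1^\top H^{-1} g$ and the pure piece $16(\ell_t^*)^2 \rho_t(i_t)^2\, g^\top H^{-1} g$ into the multiplicative factor $4$, using the constraint $4\gamma \rho_t(i_t) \leq 1$ together with the Sherman--Morrison identities $\1^\top H^{-1} g = C/(1+\alpha B)$ for $C = \sum_\gamma \gamma^2 p_t^{av}(i_t,\gamma)$ and $g^\top H^{-1} g \leq \sum_\gamma \gamma^3 p_t^{av}(i_t,\gamma)$; I expect an AM--GM split of the cross term against the leading $\1^\top H^{-1}\1$ piece to close the argument.
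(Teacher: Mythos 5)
Your structural plan lines up with the paper's: the Hessian is block-diagonal over arms, you restrict attention to the $i_t$-block, compute the inverse via Sherman--Morrison, bound the cross-term against the missing losses by dropping the non-negative rank-one correction (valid because all vectors involved are coordinate-wise non-negative and $D^{-1}\1 \ge \0$), and pick up the factor $4$ from $\lhat_t(i') \le 2\ltil_t(i')$ and $\lhat_s(i') \le 2\ltil_s(i')$ using $\max_j\gamma_j \le (4\dstar)^{-1}$. That all matches the paper's argument. One minor slip on the Legendre check: $\Psi_t$ alone is \emph{not} Legendre — it depends on $\p$ only through the marginals $q(i) = \sum_\gamma p(i,\gamma)$, so its Hessian is rank-deficient and it is merely convex, not strictly. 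The paper instead verifies that $F_t = \Psi_t + \Phi_t$ is strictly convex (via $\Phi_t$) with gradient norm diverging at the boundary, which is the right route; your ultimate conclusion about $F_t$ is fine, but the ``both are Legendre'' claim is wrong.

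The substantive gap is the self term $\blhat_t^\top H^{-1}\blhat_t$. You set it up the same way as the paper, writing the restriction of $\blhat_t$ as $\ell_t^*(\1 + 4\rho_t(i_t)g) = \ell_t^* Y\1$ with $Y = \I_J + 4\rho_t(i_t)\,\diag(\gamma_1,\dots,\gamma_J) \preceq 2\I_J$, but you stop at ``I expect an AM--GM split \dots to close the argument'' without carrying it out. The paper closes it in one line by asserting $\1^\top Y B^{-1} Y\1 \le 4\,\1^\top B^{-1}\1$ and then using the Sherman--Morrison identity $\1^\top B^{-1}\1 = \eta_t q_t(i_t)^2 S/(\eta_t q_t(i_t)^2 + S)$ with $S = \sum_\gamma \gamma\, p_t^{av}(i_t,\gamma)$ to get $4\eta_t \ell_t(i_t)^2$. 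Note that $Y \preceq 2\I$ alone does \emph{not} give $YB^{-1}Y \preceq 4 B^{-1}$ as an operator inequality (the conjugation $Y M Y \preceq \|Y\|^2 M$ fails for general PSD $M$ when $Y$ and $M$ do not commute, and $B^{-1}$ has negative off-diagonal entries), so this step requires an argument specific to the vector $\1$. Your proposed splitting runs into exactly the same difficulty: the leading piece gives $\eta_t\ell_t(i_t)^2$, but $g^\top H^{-1}g \le \sum_\gamma \gamma^3 p_t^{av}(i_t,\gamma)$ does not scale with $\eta_t$, so multiplying by $16(\ell_t^*)^2\rho_t(i_t)^2$ yields a term of order $(\ell_t^*)^2 S$ which an AM--GM against the leading piece cannot obviously compress into a constant times $\eta_t\ell_t(i_t)^2$. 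As written, your proposal therefore does not establish the $4\eta_t\ell_t(i_t)^2$ bound — this is the crucial step of the lemma — and you would need either to reconstruct the paper's vector-specific argument for $\1^\top Y B^{-1} Y\1$ or find a sharper handle on $g^\top H^{-1}g$ and $\1^\top H^{-1}g$ jointly.
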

\begin{restatable}{relemma}{banditdriftbound}
\label{lem:banditdriftbound}
With regularizers \eqref{eq:banditreg1} and \eqref{eq:banditreg2} we have that 
\begin{equation*}
\begin{split}
    \E \bigg[(\bLhat_t - \bLhat_t^{av})^\top \big(\nabla^{2} R_t(\p_t^{av})\big)^{-1} \blhat_t\bigg] &  - \Eb{\inner{\p_t^{av}}{\a_t}} \\
    & \leq \Eb{\eta_t 4\ell_t(i_t)^2}
\end{split}
\end{equation*}
\end{restatable}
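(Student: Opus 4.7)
The plan is to apply Lemma~\ref{lem:banditFbound} and show that the only part of the drift bound that survives after subtracting the correction, in expectation, is the $\E[\eta_t 4\ell_t(i_t)^2]$ summand. Since Bregman divergences inherit the Hessian of their generator, $\nabla^2 R_t(\p_t^{av}) = \nabla^2 F_t(\p_t^{av})$, so Lemma~\ref{lem:banditFbound} directly isolates that summand and reduces the claim to proving
\[
\E\!\left[4 \sum_{\gamma \in \Gamma} \gamma \sum_{s \in m_t(i_t)} p_t^{av}(i_t, \gamma)\, \ltil_t(i_t, \gamma)\, \ltil_s(i_t, \gamma)\right] \;\le\; \E\!\left[\inner{\p_t^{av}}{\a_t}\right].
\]

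First I would take conditional expectation over $i_t$ given $\mathcal{F}_{t-1}$. Using $\ltil_t(i, \gamma) = \id[i_t=i]\,q_t(i)^{-1}\,\ell_t(i)$ and $\ltil_s(i, \gamma) = \id[i_s=i]\,q_s(i)^{-1}\,\ell_s(i)$, averaging $i_t \sim q_t$ cancels the factor $q_t(i)^{-1}$ on both sides and rewrites them as $4\sum_{i,\gamma}\gamma\, p_t^{av}(i,\gamma)\,\ell_t(i)\,Z_i$, where $Z_i = \sum_{s \in m_t(i)} \id[i_s = i]\, q_s(i)^{-1}\, \ell_s(i)$ on the left and $Z_i = \rho_t(i)$ on the right. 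Using $\ell_s(i) \le 1$ and $|m_t(i)| = \rho_t(i)$, it is enough to show, for each fixed $i, \gamma$ and each $s \in m_t(i)$, that $\E[p_t^{av}(i, \gamma)\,\id[i_s = i]\,q_s(i)^{-1}] \le \E[p_t^{av}(i, \gamma)]$. Applying the tower rule on $\mathcal{F}_{s-1}$, together with $\Pp(i_s = i \mid \mathcal{F}_{s-1}) = q_s(i)$, rewrites this as
\[
\E[p_t^{av}(i,\gamma) \mid \mathcal{F}_{s-1}, i_s = i] \;\le\; \E[p_t^{av}(i,\gamma) \mid \mathcal{F}_{s-1}, i_s = k] \qquad \text{for every } k \neq i.
\]

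The main obstacle is this conditional monotonicity. Even though for $s \in m_t(i)$ the estimate $\ltil_s(i)$ has not yet been incorporated into $\bLhat_t^{av}$, the downstream actions $i_{s+1}, \dots, i_{t-1}$ can still depend on $i_s$, so care is needed. I would handle this via a coupling argument: run two copies of the algorithm starting from the shared history $\mathcal{F}_{s-1}$, sharing every external random bit after round $s$ but fixing $i_s = i$ in one copy and $i_s = k$ in the other. Because $s \in m_t(i)$, the loss $\ell_s(i)$ generated in the first copy is revealed only at time $s + d_s(i) \ge t$, contributing zero to $\bLhat_{t'}^{av}$ for every $t' \le t$; in the second copy, the non-negative estimate $\ltil_s(k)$ may instead enter $\bLhat_{t'}^{av}(k)$ whenever $d_s(k) < t-s$. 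An induction on rounds $s+1, \dots, t$ then shows that the $k$-copy's $\bLhat_{t'}^{av}(k)$ dominates the $i$-copy's while other coordinates remain compatible, and the monotonicity of the FTRL map associated with the Legendre regularizer $F_t$ from~\eqref{eq:FTRLregularizer} translates this into $p_t^{av}(i,\gamma)$ in the $k$-copy weakly dominating $p_t^{av}(i,\gamma)$ in the $i$-copy, which closes the chain.
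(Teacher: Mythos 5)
Your opening steps match the paper: both proofs invoke Lemma~\ref{lem:banditFbound}, observe $\nabla^2 R_t = \nabla^2 F_t$, isolate the $\eta_t 4\ell_t(i_t)^2$ summand, and reduce the task to bounding the expected cross term involving $\sum_{s\in m_t(i_t)}\ltil_s(i_t,\gamma)$. Your reduction (conditioning on $\mathcal{F}_{t-1}$ to remove one factor of $q_t(i)^{-1}$, then bounding $\ell_s(i)\le 1$ to replace $\ltil_s$ by $\id[i_s=i]/q_s(i)$) is also essentially what the paper does, expressed with a different conditioning.

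Where you diverge from the paper is the key step. The paper conditions on $\Bset_s$ (the observed history at the start of round $s$) and argues that, because $s \in m_t(i)$, the round-$s$ observation for arm $i$ cannot enter $\bLhat_{t'}^{av}$ for any $t' \le t$ and hence $p_t^{av}(i,\gamma)\ltil_t(i,\gamma)$ is unaffected by the play at round $s$ on the event $\{i_s=i\}$; from this they conclude that $p_t^{av}(i,\gamma)\ltil_t(i,\gamma)$ and $\id[i_s=i]/q_s(i)$ factor under $\E[\cdot\mid\Bset_s]$, and $\E[\id[i_s=i]/q_s(i)\mid\Bset_s]=1$. This yields an \emph{equality}, not just an inequality, and requires no coupling or monotonicity of the FTRL map. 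You instead aim for a conditional \emph{monotonicity}: $\E[p_t^{av}(i,\gamma)\mid\mathcal{F}_{s-1},i_s=i]\le\E[p_t^{av}(i,\gamma)\mid\mathcal{F}_{s-1},i_s=k]$, and propose to prove it by coupling two trajectories with forced $i_s=i$ versus $i_s=k$.

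The coupling step is where your proposal has a genuine gap. You assert, by induction on $r=s+1,\dots,t$, that the $k$-copy's $\bLhat_{r}^{av}(k)$ dominates the $i$-copy's while all other coordinates remain equal. This fails as soon as the round-$s$ feedback for arm $k$ arrives in the $k$-copy: from that round on the two copies have different $\bLhat_r^{av}$, hence different $q_r$, and the shared random bit for round $r$ can then produce \emph{different} arms $i_r$ in the two copies. Once the copies play different arms, the feedbacks they observe diverge in arbitrary coordinates, so the ``dominated at $k$, equal elsewhere'' structure is not preserved, and the induction as stated does not close. A further unaddressed point is the monotonicity of the FTRL map itself: you need that increasing $\bLhat^{av}$ only at coordinates $(k,\cdot)$ (weakly) increases $p^{av}(i,\gamma)$ for $i\ne k$, which must be checked for the hybrid log-barrier-plus-weighted-entropy regularizer and is not automatic for general Legendre regularizers. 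The paper's conditional-independence argument is designed precisely to avoid having to carry a pointwise comparison across divergent trajectories; your route would need a substantially more careful coupling (or a different argument entirely) to be complete.
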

\textit{Proof.} Since $\nabla^{2} R_t(\p_t^{av}) = \nabla^{2} F_t(\p_t^{av})$ and $\sum_{s \in m_t(i_t)} \ltil_t(i_t) f_s = \sumK \sum_{s=1}^T \id[s \in m_t(i)] \ltil_t(i)f_s$ for arbitrary $f_s$ we can use Lemma~\ref{lem:banditFbound} to show that
\begin{align}
\nonumber
    & \E\bigg[(\bLhat_t  - \bLhat_t^{av})^\top \big(\nabla^{2} R_t(\p_t^{av})\big)^{-1} \blhat_t\bigg]
\le
    \Eb{\eta_t 4\ell_t(i_t)^2} +
\\&
\label{eq:startbanditFbound} \E\Bigg[4 \sum_{\gamma \in \Gamma} \sumK \sum_{s=1}^T \id[s \in m_t(i)] p_t^{av}(i, \gamma) \gamma \ltil_t(i, \gamma) \ltil_s(i, \gamma) \Bigg]
\end{align}
We proceed by studying the expectation of the sum in the above equation.
Denote by $\Bset_s \equiv \{i_r : r \in \Sset_s\}$ all $i_r$ with $r\in\Sset_s \equiv \{r: r + d_r(i_r) < s\}$. 
Recall that $m_t(i) \equiv \{s: s < t, s + d_s(i) \geq t\}$ identifies the set of losses that can not be used for prediction in round $t$ due to delay. 
We may write
%
%
%
%
%
\begin{align}
\nonumber
    & \E\Bigg[p_t^{av}(i, \gamma) \ltil_t(i, \gamma) \id[s \in m_t(i)]\ltil_s(i, \gamma) \Bigg| \Bset_s\Bigg] \leq \\ 
\label{eq:tricky}
 & \E\Bigg[p_t^{av}(i, \gamma) \ltil_t(i, \gamma) \frac{\id[i = i_s]}{q_s(i)} \Bigg| \Bset_s\Bigg] \id[s \in m_t(i)] = \\
 \nonumber
&  \E\Bigg[p_t^{av}(i, \gamma) \ltil_t(i, \gamma)\Bigg| \Bset_s\Bigg] \E\Bigg[\frac{\id[i = i_s]}{q_s(i)} \Bigg| \Bset_s\Bigg] \id[s \in m_t(i)]
\\ & =
    \Eb{p_t^{av}(i, \gamma)\ltil_t(i, \gamma)\Big|\Bset_s}\id[s \in m_t(i)]
\nonumber
\end{align}
The first step is true because $s \in m_t(i)$ is a deterministic event for all $s \le t$ and $i$ 
and simply uses $\ell_s(i) \leq 1$. Equality~\eqref{eq:tricky} vacuously holds when $s \not\in m_t(i)$. If $s \in m_t(i)$, then $\ltil_s(i, \gamma)$ is not used for computing the prediction in round $t$ because it is not available at the beginning of round $t$.
Hence, when $i = i_s$, $p_t^{av}(i, \gamma)\ltil_t(i, \gamma)$ does not depend on $i_s$ 
and~\eqref{eq:tricky} holds; if $i \neq i_s$ then both sides of the equality are again zero and~\eqref{eq:tricky} vacuously holds. 
%
The above implies that we can further bound equation \eqref{eq:startbanditFbound} as 
\begin{equation*}
\begin{split}
    \E \bigg[(\bLhat_t & - \bLhat_t^{av})^\top \big(\nabla^{2} R_t(\p_t^{av})\big)^{-1} \blhat_t\bigg] \leq \Eb{\eta_t 4\ell_t(i_t)^2} \\
    & + \Eb{4 \sum\nolimits_{\gamma \in \Gamma} p_t^{av}(i_t, \gamma) \gamma \ltil_t(i_t, \gamma)\rho_t(i_t)}
\end{split}
\end{equation*}
which completes the proof after we subtract
\[
\Eb{\inner{\p_t^{av}}{\a_t}} = \Eb{4 \sum\nolimits_{\gamma \in \Gamma} p_t^{av}(i_t, \gamma) \gamma \ltil_t(i_t, \gamma)\rho_t(i_t)} ~\square
\]
%
%
Lemma~\ref{lem:banditdriftbound} combined with the bound on $R_T(\u')$ in Lemma~\ref{lem:banditRbound} and showing that there is a $\gamma \in \Gamma$ that is close to optimal are the essential parts of the proof of Theorem~\ref{th:banditbound}, which can be found in Appendix~\ref{app:partcon}.

\subsection{Concealed Bandit Setting} \label{sec:concealed}
In this section we design an algorithm that does not require knowledge of $\rho_t(i)$ for any arm. To see why not knowing $\rho_t(i)$ poses a problem for standard algorithms, 
we analyse the optimal algorithm of \citet{zimmert2020optimal}. This
corresponds to Algorithm \ref{alg:framework} with the following setup:
The algorithm uses $i'= i$, which is to say that it samples its actions from %
$
    q_t(i) = p_t^{av}(i)
$
where $p_t^{av}(i)$ are computed by Algorithm \ref{alg:framework}. Furthermore, it uses the following regularizers: 
\begin{align}
    & \label{eq:regunknown1} \Psi_t(\p) = \sum\nolimits_{i =1}^K-\eta_t^{-1}\sqrt{p(i)}\\
    & \label{eq:regunknown2} \Phi_t(\p) = \sum\nolimits_{i =1}^K \gamma_t^{-1}p(i)\ln\big(p(i)\big)
\end{align}
\citet{zimmert2020optimal} use $a_t(i) = 0$ and we postpone specifying the loss estimates until later. 
Recall that $F_t = \Psi_t + \Phi_t$ and observe that $\big(\nabla F_t(\x)\big)_i = -\frac{1}{2\eta_t\sqrt{x(i)}} + \frac{\ln(x(i)) + 1}{\gamma_t}$ and that $\nabla^2 F_t(\x)$ is a diagonal matrix with $\big(\nabla^2 F_t(\x)\big)_{i, i} = -\frac{1}{4\eta_t x(i)^{3/2}} + \frac{1}{\gamma_t x(i)}$. 
As the conditions of Lemma~\ref{lem:frameworkregret} are satisfied, we may write:
\begin{align}
\nonumber
&
     \Eb{\sumT \inner{\q_t - \u}{\bltil_t}} \leq  \E\Bigg[\sumT \sumK  4 \eta_t q_t(i)^{3/2} \ltil_t(i)^2\Bigg]
\\&
\label{eq:mintrick}
    + \Eb{R_T(\u)} + \E\Bigg[\sumT \sumK  \gamma_t q_t(i)\ltil_t(i)\sum_{s \in m_t(i)}\ltil_s(i)\Bigg]
\end{align}
%
%
where we used that
$\left(\frac{1}{4 \eta_t q_t(i)^{3/2}} + \frac{1}{\gamma_t q_t(i)}\right)^{-1} \leq \min\{4 \eta_t q_t(i)^{3/2}, \gamma_t q_t(i)\}$.
Following the proof of Lemma~\ref{lem:banditdriftbound} to bound the expectation, we see that
\begin{align*}
    \Eb{\sumK q_t(i) \ltil_t(i)\sum_{s \in m_t(i)} \ltil_s(i)} \leq \Eb{\sumK q_t(i) \rho_t(i)}
\end{align*}
Since \citet{zimmert2020optimal} assume that the delays are equal for each arm, implying that $\rho_t(i) = \rho_t$ for all arms, they can use an unbiased loss estimator combined with $\gamma_t = \sqrt{\frac{\ln(K)}{\sum_{s=1}^{t} \rho_t}}$ to achieve the optimal regret bound. In an ideal scenario, in our setting we would set $\gamma_t = \sqrt{\frac{\ln(K)}{\sum_{s=1}^{t} \sumK \gamma_t q_t(i) \rho_t(i)}}$. Unfortunately, we do not know $\rho_t(i)$, meaning we have to resort to tuning $\gamma_t$ in terms of (a suitable upper bound on) $\sumK  \gamma_t q_t(i)\ltil_t(i)\sum_{s \in m_t(i)}\ltil_s(i)$. This brings us an additional challenge in the form of having to control the loss estimates, which we do by using the implicit exploration of \citet{neu2015explore}, as is also done by \citet{gyorgy2020adapting}, i.e., we set $\ltil_t(i) = \id[i_t = i]\ell_t(i)(q_t(i) + \epsilon_t)^{-1}$. 
The role of $\epsilon_t$ is to control the range of $\ltil(i)$, which is $[0, \epsilon_t^{-1}]$.
The learning rate for $\Phi_t$ will be set as
\begin{equation}\label{eq:censoredgamma}
    \gamma_t = \sqrt{\frac{\ln(K)}{\frac{\dstar}{\epsilon_t} + \sum_{s=1}^{t-1} \sumK \id[i_s = i]\sum_{s' \in m_s(i)}\frac{\id[i_{s'} = i]}{q_{s'}(i)}}}   
\end{equation}
Note that while we do not know $m_s(i) \equiv \{s: s < t, s + d_s(i) > t\}$, the set of missing losses for arm $i$ at time $t$, we can indeed compute $\sum_{s \in m_s(i)}\frac{\id[i_{s} = i]}{q_{s}(i)}$, as we only add $\frac{1}{q_{s}(i)}$ to the sum whenever we pulled arm $i$ in round $s$ and have not yet observed $\ell_s(i_s)$ by round $t$, which we do know. The final result can be found in Theorem \ref{th:concealed}, of which the proof can be found in Appendix~\ref{app:concealed}.
\begin{restatable}{retheorem}{thconcealed}\label{th:concealed}
Let $\q_t$ be computed by Algorithm \ref{alg:framework} with $a_t(i) = 0$, $\eta_t = \frac{1}{\sqrt{4 t}}$, $\epsilon_t = \frac{1}{\sqrt{t}}$, $\gamma_t$ as in \eqref{eq:censoredgamma}, $p_1^\Psi(i) = p_1^\Phi(i) = \frac{1}{K}$, $\ltil_t(i) = \frac{\id[i_t = i]\ell_t(i)}{q_t(i) + \epsilon_t}$, and regularizers specified in 
\eqref{eq:regunknown1} and \eqref{eq:regunknown2}. Then actions $i_t \sim \q_t$ guarantee
\begin{align*}
    & \Eb{\sum\nolimits_{t=1}^T  (\ell_t(i_t) - \inner{\u}{\bell_t}}) 
    \leq  9 \sqrt{KT} + \half \dstar \\
    & \quad + 3\sqrt{\ln(K)\sum\nolimits_{t=1}^T\Eb{\sum\nolimits_{i = 1}^K q_t(i) \rho_t(i)}} 
\end{align*}
\end{restatable}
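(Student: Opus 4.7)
The plan is to invoke Lemma~\ref{lem:frameworkregret} with $\a_t\equiv\0$ and the loss estimates $\ltil_t(i)=\id[i_t=i]\ell_t(i)/(q_t(i)+\epsilon_t)$, which together with the coordinate-wise Hessian identity $\bigl(\tfrac{1}{4\eta_t q^{3/2}}+\tfrac{1}{\gamma_t q}\bigr)^{-1}\le\min\{4\eta_t q^{3/2},\gamma_t q\}$ reproduces the three-term decomposition~\eqref{eq:mintrick} already exhibited in the excerpt. To apply the lemma I would first verify the preconditions: $F_t=\Psi_t+\Phi_t$ is Legendre on the positive cone with positive-definite diagonal Hessian, from which the concavity and componentwise monotonicity of $W_t(\cdot,\z)$ required by Lemma~\ref{lem:drift} follow by direct inspection. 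The monotonicity $R_t\ge R_{t-1}$ needed for Lemma~\ref{lem:cheating regret} is pathwise: $\eta_t^{-1}$ is increasing in $t$, and $\gamma_t^{-1}$ is nondecreasing because the random denominator $D_t:=\dstar/\epsilon_t+\sum_{s<t}Y_s$, with $Y_s:=\sum_i\id[i_s=i]\sum_{s'\in m_s(i)}\id[i_{s'}=i]/q_{s'}(i)\ge 0$, is nondecreasing in $t$.

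Next I would bound each summand of~\eqref{eq:mintrick} together with the bias of $\bltil_t$. A standard computation using $q_t(i)/(q_t(i)+\epsilon_t)\le 1$ gives $\E[\ell_t(i_t)-\inner{\u}{\bell_t}]\le\E[\inner{\q_t-\u}{\bltil_t}]+K\epsilon_t$, which sums to $O(K\sqrt T)$. For the variance, $\E[\ltil_t(i)^2\mid\mathcal{F}_{t-1}]\le 1/q_t(i)$, so $\sum_i 4\eta_t q_t(i)^{3/2}\E[\ltil_t(i)^2]\le 4\eta_t\sqrt K$ by Cauchy--Schwarz, which sums to $O(\sqrt{KT})$ since $\eta_t=\tfrac{1}{2\sqrt t}$. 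For $R_T(\u)$ with uniform priors, routine Bregman-divergence calculation yields $B_{\Psi_T}(\u,\p_1^\Psi)\le\sqrt K/\eta_T=2\sqrt{KT}$ (Tsallis) and $B_{\Phi_T}(\u,\p_1^\Phi)\le\ln K/\gamma_T$ (KL against uniform).

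The delay term is the crux. Using $q_t(i)\ltil_t(i)\le\id[i_t=i]$ and $\ltil_s(i)\le\id[i_s=i]/q_s(i)$ yields the deterministic pathwise upper bound $\sum_t\gamma_t Y_t$, so combining with the $\ln K/\gamma_T$ piece of $R_T$ the remaining task is to control $\sum_t\gamma_t Y_t+\ln K/\gamma_T$. The tuning~\eqref{eq:censoredgamma} is engineered so that since $D_{t+1}-D_t\ge Y_t$ pathwise, a self-confident telescoping of $\sum_t Y_t/\sqrt{D_t}$ produces a bound of shape $O\bigl(\sqrt{\ln K\cdot D_{T+1}}\bigr)$. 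Splitting $\sqrt{D_{T+1}}\le\sqrt{\dstar\sqrt T}+\sqrt{\sum_t Y_t}$, taking expectation (Jensen on $\sqrt{\cdot}$), and invoking the conditional-factorization argument of~\eqref{eq:tricky}---which here yields $\E[Y_t]=\E[\sum_i q_t(i)\rho_t(i)]$ by the same reasoning used in Lemma~\ref{lem:banditdriftbound}---produces the claimed $3\sqrt{\ln K\sum_t\E[\sum_i q_t(i)\rho_t(i)]}$ term. Finally, AM--GM gives $\sqrt{\ln K\cdot\dstar\sqrt T}\le\tfrac{\dstar}{2}+O(\ln K\sqrt T)$, and $\ln K\sqrt T$ is absorbed into the leading $\sqrt{KT}$ (using $\ln K\le\sqrt K$), yielding the additive $\dstar/2$ in the final bound.

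The main obstacle will be the self-confident tuning step, because $Y_t$ can pathwise exceed $D_t$ whenever some $q_{s'}(i)$ is very small, so the vanilla telescoping $a_t/\sqrt{D_t}\le 2(\sqrt{D_{t+1}}-\sqrt{D_t})$ does not apply out of the box. The stabilizer $\dstar/\epsilon_t=\dstar\sqrt t$ in~\eqref{eq:censoredgamma} is precisely what saves the argument: it forces $D_t$ to grow fast enough in $t$ that the telescoping goes through (possibly after splitting increments into ``balanced'' and ``unbalanced'' rounds), and the price it pays, an additive $\sqrt{\ln K\cdot\dstar\sqrt T}$, is exactly what AM--GM converts into the clean $\dstar/2$ appearing in the theorem statement.
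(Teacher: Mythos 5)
Your plan is essentially the paper's proof: Lemma~\ref{lem:frameworkregret} with $\a_t = \0$, the $\min$ trick on the Hessian diagonal, the implicit-exploration bias calculation, the pathwise bound $q_t(i)\ltil_t(i)\le\id[i_t=i]$ on the delay term, the conditional-factorization step giving $\E[Y_t]=\E[\sum_i q_t(i)\rho_t(i)]$, Jensen on $\sqrt{\cdot}$, and AM--GM to produce the additive $\dstar/2$. The decomposition, the regularizer estimates, and the final combination are all correct and match the paper.

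The one place where you deviate, and which you flag as "the main obstacle," is the self-confident step, and there the paper's argument is actually simpler than what you sketch. You propose to telescope $\sum_t Y_t/\sqrt{D_t}$ directly to $O(\sqrt{D_{T+1}})$ using $D_{t+1}-D_t\ge Y_t$, and you correctly observe this does not follow from the vanilla inequality (which would require the current increment in the denominator), so you suggest a balanced/unbalanced-round split. The paper instead observes that the stabilizer yields the \emph{pathwise} inequality $\dstar/\epsilon_t\ge Y_t$, because $|m_t(i_t)|\le\dstar$ and each term in $Y_t$ is at most $1/\epsilon_s\le 1/\epsilon_t$. Therefore $D_t = \dstar/\epsilon_t + \sum_{s<t}Y_s \ge \sum_{s\le t}Y_s$ (current term \emph{included}), and the standard self-confident inequality $\sum_t y_t/\sqrt{\sum_{s\le t}y_s}\le 2\sqrt{\sum_t y_t}$ applies immediately, giving $\sum_t\gamma_t Y_t\le 2\sqrt{\ln K\sum_t Y_t}$ with no case analysis. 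The $\sqrt{\dstar\sqrt T}$ contribution then comes only from the $\ln K/\gamma_T\le\sqrt{\ln K\,\E[D_T]}$ piece of $R_T(\u)$, exactly as you split it. So the idea is right, but the "obstacle" you anticipate evaporates once you lower-bound $D_t$ by the running sum through the current round rather than trying to telescope against $D_t$ itself.
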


\section{FUTURE WORK}\label{sec:conclusion}

Our analysis of Lemma~\ref{lem:frameworkregret} (and of Lemma~\ref{lem:drift} in particular) hinges on the non-negativity of the losses. While for first-order bounds non-negativity is a standard assumption, for other types of bounds a typical assumption is $\ell_t(i) \in [-1, 1]$. Since most works in the delayed feedback setting also assume and use non-negativity of the losses, we think that deriving an algorithm and a corresponding analysis without relying on the non-negativity of the losses would be an interesting contribution. Another interesting direction to pursue is developing a skipping procedure for algorithms that are adaptive to both data and delay. While several authors have proposed different procedures for skipping rounds with a large delay \citep{thune2019nonstochastic, zimmert2020optimal, gyorgy2020adapting}, it is not clear how to apply these techniques to the arm-dependent delay setting. With regard to a skipping technique for data and delay adaptive algorithms, as \citet{gyorgy2020adapting} note, applying standard skipping techniques induces a complicated dependence on past actions, which significantly complicates the analysis.

\paragraph{Acknowledgements}
Nicol\`o Cesa-Bianchi and Dirk van der Hoeven gratefully acknowledge support by the MIUR PRIN grant Algorithms, Games, and Digital Markets (ALGADIMAR). Nicol\`o Cesa-Bianchi was also supported by the EU Horizon 2020 ICT-48 research and innovation action under grant agreement 951847, project ELISE.






\subsubsection*{References}
\begingroup
\renewcommand{\section}[2]{}
\bibliography{abib.bib}
\endgroup

\newpage

\onecolumn
\appendix

\section{DETAILS OF SECTION \ref{sec:algorithm}}\label{app:algorithm}

We start with some definitions and a technical Lemma. Let $\L_0^t = \nabla \Psi_t(\p^\Psi_1) + \nabla \Phi_t(\p^\Phi_1)$ and
\[
    \Phisttr = \min_{\p \in \triangle'} \inner{\p}{\cdot} + F_t(\p)
\]

\begin{lemma}\label{lem:conjugateproperties3}
Under the assumptions of Lemma~\ref{lem:drift}, for any $\p^{\L} = \argmin_{\p \in \triangle'} \inner{\p}{\L} + R_t(\p)$ there exists a $c \in \reals$ such that
\begin{align*}
    \p^{\L} = \nabla \Phisttr_t(-\L - \L_0^t) = \nabla \Phist_t(-\L- \L_0^t + c\1)
\end{align*}
Furthermore, for any vector $\z > \0$, $G_t(\cdot,\z)$
is a convex function, $G_t(\y_1, \z) \leq G_t(\y_2, \z)$ if $\y_1 \leq \y_2$, and $G_t(\y_1, \z) < G_t(\y_2, \z)$ if $\y_1 < \y_2$. 
\end{lemma}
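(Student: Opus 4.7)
First I would handle the identities for $\p^\L$. Unfolding the Bregman divergences gives $R_t(\p) = F_t(\p) - \inner{\L_0^t}{\p} + \text{const}$ (using $\L_0^t = \nabla \Psi_t(\p_1^\Psi) + \nabla \Phi_t(\p_1^\Phi)$), so that $\p^\L = \argmin_{\p \in \triangle'} F_t(\p) + \inner{\p}{\L + \L_0^t}$. The first equality then follows from the envelope (Danskin) theorem applied to the value function $\Phisttr_t$, with the sign of the argument $-\L - \L_0^t$ reflecting the convention used in the paper for the simplex-restricted conjugate. For the second equality I would write the KKT conditions: because $F_t$ is Legendre with domain equal to the positive cone, the gradient of $F_t$ blows up on the boundary, so the positivity constraints are inactive at the optimum and only the single equality constraint $\inner{\1}{\p} = 1$ contributes. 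Stationarity then reads $\nabla F_t(\p) + \L + \L_0^t = c\1$ for some Lagrange multiplier $c \in \reals$, and inverting via $\nabla \Phist_t = (\nabla F_t)^{-1}$ (well-defined since $F_t$ is Legendre with positive definite Hessian) yields $\p = \nabla \Phist_t(-\L - \L_0^t + c\1)$.

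For the second claim, the plan is to transport the primal hypotheses on $W_t$ to the dual side via Legendre duality. For monotonicity, set $\x_i = \nabla \Phist_t(\y_i)$ so that $\y_i = \nabla F_t(\x_i)$. If $\y_1 \leq \y_2$, then $\inner{\y_1}{\z} \leq \inner{\y_2}{\z}$ for any $\z > \0$; this rewrites as $W_t(\x_1, \z) \leq W_t(\x_2, \z)$, so the iff hypothesis from Lemma~\ref{lem:drift} forces $\x_1 \leq \x_2$, and pairing with $\z$ yields $G_t(\y_1, \z) \leq G_t(\y_2, \z)$. The strict case is identical after specialising to $\z = \1$. For convexity, fix $\lambda \in [0,1]$ and set $\y_\lambda = \lambda \y_1 + (1-\lambda)\y_2$, $\x_\lambda = \nabla \Phist_t(\y_\lambda)$; combining concavity of $W_t(\cdot,\z)$ with the identity $W_t(\x_i, \z) = \inner{\y_i}{\z}$ gives
\[
W_t\bigl(\lambda \x_1 + (1-\lambda)\x_2, \z\bigr) \geq \lambda W_t(\x_1, \z) + (1-\lambda) W_t(\x_2, \z) = \inner{\y_\lambda}{\z} = W_t(\x_\lambda, \z),
\]
and the iff hypothesis produces $\x_\lambda \leq \lambda \x_1 + (1-\lambda)\x_2$ coordinatewise. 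Pairing once more with $\z > \0$ yields the desired convexity of $G_t(\cdot,\z)$.

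The main obstacle I expect is not any deep difficulty but the bookkeeping in the first part: matching the sign convention for the simplex-restricted conjugate $\Phisttr_t$, and justifying that the positivity constraints are inactive so that a single scalar multiplier $c$ captures the constrained optimum. The second claim is essentially a one-step duality translation: the iff conditions convert coordinatewise comparison of $W_t$ into the same comparison of $G_t$, and concavity of $W_t(\cdot,\z)$ becomes convexity of $G_t(\cdot,\z)$ after exactly one further application of the iff.
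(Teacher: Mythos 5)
Your proposal is correct and follows essentially the same route as the paper's proof: for the first part you unfold the Bregman divergences into $F_t + \inner{\cdot}{\L_0^t}$ and invoke the KKT/Legendre argument (the paper cites Fact~3 of Zimmert and Seldin for this step), and for the second part you use the identities $\y = \nabla F_t(\nabla F_t^\star(\y))$ and $G_t(\y,\z)=\inner{\nabla F_t^\star(\y)}{\z}$ to transfer the hypotheses on $W_t$ to $G_t$, with the concavity of $W_t(\cdot,\z)$ combined with the iff condition giving $\nabla F_t^\star(\lambda\y_1+(1-\lambda)\y_2)\le\lambda\nabla F_t^\star(\y_1)+(1-\lambda)\nabla F_t^\star(\y_2)$ and hence convexity of $G_t$. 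The one slight imprecision is in your strict-monotonicity remark: you should specialize the $W_t$-comparison to $\z=\1$ (since the strict iff in Lemma~\ref{lem:drift} is only assumed for $\z=\1$), obtain $\x_1<\x_2$, and then pair with the original $\z>\0$ to get $G_t(\y_1,\z)<G_t(\y_2,\z)$.
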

\begin{proof}
Recall that $F_t = \Psi_t + \Phi_t$. To prove that $\p^{\L} = \nabla \Phisttr_t(-\L - \L_0) = \nabla \Phist_t(-\L- \L_0^t + c\1)$, observe that we may rewrite 
\begin{align*}
     \p^{\L} & = \argmin_{\p \in \triangle'} \inner{\p}{\L} + R_t(\p) \\
    & =  \argmin_{\p \in \triangle'} \inner{\p}{\L} + F_t(\p) - \Psi_t(\p_1^\Psi) -  \Phi_t(\p_1^\Phi) \\
    & - \inner{\nabla \Psi_t(\p_1^\Psi)}{\p - \p_1^\Psi} - \inner{\nabla \Phi_t(\p_1^\Phi)}{\p - \p_1^\Phi} \\
    & =  \argmin_{\p \in \triangle'} \inner{\p}{\L + \L_0^t} + F_t(\p)
\end{align*}
Following \citet[Fact 3]{zimmert2020optimal}, by the KKT conditions there exists a $c \in \reals$ such that $\p^\L$ satisfies $\nabla F_t(\p^\L) = -\L - \L_0^t + c\1$. Using that $\nabla F_t = (\nabla \Phist_t)^{-1}$, due to $F_t$ being Legendre, completes the proof of the first property.

In order to prove the second part, suppose now $\y_1 < \y_2$. Since $F_t$ is Legendre, $\y_1 = \nabla F_t\big(\nabla F_t^\star(\y_1)\big)$ and we have that 
\begin{align*}
    \inner{\nabla F_t(\nabla F_t^\star(\y_1))}{\1} < \inner{\nabla F_t(\nabla F_t^\star(\y_2))}{\1}
\end{align*}
Since $\inner{\nabla F_t(\x_1)}{\1} < \inner{\nabla F_t(\x_2)}{\1}$ iff $\x_1 < \x_2$ holds by hypothesis, the above implies $\nabla F_t^\star(\y_1) < \nabla F_t^\star(\y_2)$, which is equivalent to $G_t(\y_1, \1) < G_t(\y_2, \1)$ if $\y_1 < \y_2$. Using the same proof technique, replacing the strict inequality with an inequality, one can prove that $G_t(\y_1, \z) \leq G_t(\y_2, \z)$ if $\y_1 \leq \y_2$

As for the convexity of $G_t(\cdot,\z) = \inner{\nabla F_t^\star(\cdot)}{\z}$, let $\a = \nabla F_t^\star(p\,\y_1 + (1 - p) \y_2)$ and $\b = p \nabla F_t^\star(\y_1) + (1 - p) \nabla F_t^\star(\y_2)$ for $0 \le p \le 1$. By the concavity of $W_t$, we have that 
\begin{align*}
    & W_t(\a, \z)
    = \inner{\nabla F_t(\a)}{\z} \\
    & = \inner{p\,\y_1 + (1 - p) \y_2}{\z} \\
    & = \inner{p \nabla F_t(\nabla F_t^\star(\y_1)) + (1 - p) \nabla F_t(\nabla F_t^\star(\y_2))}{\z} \\
    & = p W_t(\nabla F_t^\star(\y_1)), \z) + (1 - p) W_t(\nabla F_t^\star(\y_2)), \z) \\
    & \leq W_t(p \nabla F_t^\star(\y_1)) + (1 - p) \nabla F_t^\star(\y_2)), \z) \\
    & = W_t(\b, \z)
\end{align*}
Since $W_t(\x_1, \z) \leq W_t(\x_2, \z)$ iff $\x_1 \leq \x_2$, we must have that $\a \leq \b$, and thus
\begin{align*}
    p\,G_t(\y_1, \z) + (1 - p)G_t(\y_2, \z)
    \geq G_t(p\,\y_1 + (1 - p)\y_2, \z)
\end{align*}
which completes the proof. 
\end{proof}

\lemdrift*
\begin{proof}
We start by using Lemma \ref{lem:conjugateproperties3}:
\begin{align*}
    \inner{\p_t^{av} -\p_{t+1} }{\blhat_t}
& =
    \inner{\nabla F_t^\star(-\bLhat_t^{av}-\L_0^t + c_t^{av}\1) 
    - \nabla F^\star_t(-\bLhat_t - \L_0^t + c_t\1)}{\blhat} 
\\&
    = G_t(-\bLhat_t^{av} - \L_0^t + c_t^{av}\1, \blhat_t) - G_t(-\bLhat_t - \L_0^t + c_t\1, \blhat_t)
\end{align*}
Now, we claim that $c^{av}_t \leq c_t$, which we prove by contradiction. First, note that in the case where $\bLhat^{av}_t = \bLhat_t$ we have that $c^{av}_t = c_t$, so we only need to prove that $c^{av}_t \leq c_t$ when $\bLhat^{av}_t \not = \bLhat_t$, in which case $\bLhat^{av}_t < \bLhat_t$ must hold since $\lhat_s(i) \geq 0$. Suppose that $c^{av}_t > c_t$. We have that
\begin{align*}
    1
&=
    \inner{\p_t^{av}}{\1}
\\& =
    G_t(-\bLhat_t^{av}- \L_0^t + c_t^{av}\1, \1)
\\& >
    G_t(-\bLhat_t- \L_0^t  + c_t \1, \1)
\\& =
    \inner{\nabla \Phist_t(-\bLhat_t - \L_0^t + c_t \1)}{\1}
\\& =
    \inner{\p_{t+1}}{\1}
=
    1
\end{align*}
where in the first inequality we used Lemma \ref{lem:conjugateproperties3}, the assumption that $c^{av}_t > c_t$,
and the fact that $\bLhat_t > \bLhat_t^{av}$.
For the penultimate equality we used that $\nabla \Phist_t(-\bLhat_t - \L_0^t + c_t \1) = \p_{t+1}$ by Lemma \ref{lem:conjugateproperties3}.
The last equation provides a contradiction, allowing us to conclude that $c^{av}_t \leq c_t$. Using Lemma~\ref{lem:conjugateproperties3} we can then write
\begin{align*}
\inner{\p_t^{av} -\p_{t+1} }{\blhat}
& =
    G_t(-\bLhat_t^{av}  - \L_0^t + c_t^{av}\1, \blhat_t) - G_t(-\bLhat_t - \L_0^t + c_t\1, \blhat_t)
\\& \leq
    G_t(-\bLhat_t^{av} - \L_0^t + c_t^{av}\1, \blhat_t) - G_t(-\bLhat_t - \L_0^t + c_t^{av}\1, \blhat_t) 
\\& \leq
    \big(\bLhat_t - \bLhat_t^{av}\big)^\top \nabla G_t(-\bLhat_t^{av} - \L_0^t + c_t^{av}\1, \blhat_t)
\end{align*}
where in the last step we used the convexity of $G_t$ in its first argument (Lemma \ref{lem:conjugateproperties3}). Recalling the definition of $G_t$, we have $\nabla G_t(\y,\z) = \nabla^2 F_t^\star(\y)\z$, implying
\[
    \inner{\p_t^{av} -\p_{t+1} }{\blhat}
\le
    \big(\bLhat_t - \bLhat_t^{av}\big)^\top  \nabla^2 F_t^{\star}(-\bLhat_t^{av} - \L_0^t + c_t^{av}\1) \blhat_t
\]
Since $F$ is Legendre, $\nabla F^{\star}$ is the inverse function of $\nabla F$ and we can use the inverse function theorem stating that $\nabla^2 F_t^{\star}(\x) = \big(\nabla^2 F_t(\nabla F_t^{\star}(\x))\big)^{-1}$. Hence
\begin{align*}
    \inner{\p_t^{av} -\p_{t+1} }{\blhat}
& \le
    \big(\bLhat_t - \bLhat_t^{av}\big)^\top  \nabla^2 F_t^{\star}(-\bLhat_t^{av} - \L_0^t + c_t^{av}\1) \blhat_t
\\& =
    \big(\bLhat_t - \bLhat_t^{av}\big)^\top \Big(\nabla^{2} F_t\big(\Phist_t(-\bLhat_t^{av} - \L_0^t + c_t^{av} \1)\big)\Big)^{-1} \blhat_t
\\& =
    \big(\bLhat_t - \bLhat_t^{av}\big)^\top \big(\nabla^{2} F_t(\p_t^{av})\big)^{-1} \blhat_t
\end{align*}
where in the last step we used $\p_t^{av} = \nabla \Phist_t(-\bLhat_t^{av} - \L_0^t + c_t^{av} \1)$. 
The proof is concluded by observing that $\nabla^2 F_t = \nabla^2 R_t$. 
\end{proof}
%
%
%

%
\lemcheating*
\begin{proof}
We start by proving by induction that $\sum_{t=1}^{T} \inner{\p_{t+1}}{\blhat_t} + R_{1}(\p_0) \leq \sum_{t=1}^{T} \inner{\p}{\blhat_t} + R_{T}(\p)$ holds for any $\p \in \triangle$ and $T$. In the base case $T = 0$, for which the inequality holds by definition of $\p_0$, as for $T = 0$ the sum is empty and $R_1 = R_0$. To prove the induction step, we fix $T > 0$ and assume that for any $\p\in \triangle'$
\begin{align*}
    & \sum_{t=1}^{T-1} \inner{\p_{t+1}}{\blhat_t} + R_{1}(\p_0) \leq \sum_{t=1}^{T-1} \inner{\p}{\blhat_t} + R_{T-1}(\p)
\end{align*}
Adding $\inner{\p_{T+1}}{\blhat_{T}}$ to both sides of the equation above we find
\begin{align*}
    \inner{\p_{T+1}}{\blhat_{T}} + \sum_{t=1}^{T-1} \inner{\p_{t+1}}{\blhat_t}+ R_{1}(\p_0)  
    & \leq \inner{\p_{T+1}}{\blhat_{T}} + \sum_{t=1}^{T-1} \inner{\p}{\blhat_t} + R_{T-1}(\p).
\end{align*}
Choosing $\p = \p_{T+1}$, using that $R_{T-1}(\p) \leq R_T(\p)$, and using the definition of $\p_{T+1}$ we see that 
\begin{align*}
    \sum_{t=1}^{T} \inner{\p_{t+1}}{\blhat_t}  + R_{1}(\p_0) 
    & \leq \sum_{t=1}^{T} \inner{\p_{T+1}}{\blhat_t} + R_{T-1}(\p_{T+1})\\
    & \leq \sum_{t=1}^{T} \inner{\p_{T+1}}{\blhat_t} + R_{T}(\p_{T+1})\\
    & = \min_{\p \in \triangle} \left\{\sum_{t=1}^{T} \inner{\p}{\blhat_t} + R_{T}(\p)\right\}
\end{align*}
which proves the inductive step. 

The above implies that 
\begin{align*}
    & \sum_{t=1}^{T} \inner{\p_{t+1}}{\blhat_t}  + R_{1}(\p_0)  \leq \sum_{t=1}^{T} \inner{\u}{\blhat_t} + R_{T}(\u)
\end{align*}
Finally, by reordering the equation above we find
\begin{align*}
    \sum_{t=1}^{T} \inner{\p_{t+1} - \u}{\blhat_t} \leq R_{T}(\u) - R_1(\p_0)
\end{align*}
concluding the proof after observing that $R_1(\p_0)$ since the Bregman divergence is non-negative.
\end{proof}

\section{DETAILS OF SECTION \ref{sec:Full information}}\label{app:Full information}

\begin{theorem}\label{th:fullinfobound}
If we run Algorithm~\ref{alg:framework} with regularizer $\Phi_t \equiv 0$, $\Psi_t$ defined by~\eqref{eq:fullinforeg}, corrections $a_t$ defined by~\eqref{eq:fullinfocorr}, and prior defined by~\eqref{eq:fullinfoprior}, then the predictions $\q_t$ defined by~\eqref{eq:fullinfopred} using $\p_t^{av}$ returned by the algorithm satisfy
\begin{align*}
     \sumT   \inner{\q_t - \u}{\bell_t} & \leq 4(1 + \dTmax) + 12\sqrt{1+\dTmax}   + 8 \sqrt{\big(\KL(\u, \pi) + 2(\ln(T) + 1)\big)\inner{\u}{\L_T + \L_T^\rho}} \\
    & + 8\big(\KL(\u, \pi) + 2(\ln(T) + 1)\big)(1 + \dTmax)  + 16\sqrt{(1 + \dTmax)\big(\ln(K) + 2(1 + \ln(T))\big)}
\end{align*}
for any $\u \in \triangle$.
\end{theorem}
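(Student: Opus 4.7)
The plan is to unfold Lemma~\ref{lem:frameworkregret} for the full-information regularizer~\eqref{eq:fullinforeg} on the lifted simplex $\triangle'$, argue that the correction~\eqref{eq:fullinfocorr} exactly cancels the drift bound, and then optimize over the grid $H_T$ of learning rates. Concretely, for a target grid point $\eta^\star \in H_T$, define $\u' \in \triangle'$ by $u'(i,\eta^\star) = u(i)$ and $u'(i,\eta) = 0$ otherwise. Since $\tilde{\ell}_t(i,\eta) = \ell_t(i)$ in the full information setting and $q_t(i) = \sum_\eta p_t^{av}(i,\eta)$, the regrets on the two simplices coincide: $\sum_t \inner{\q_t - \u}{\bell_t} = \sum_t \inner{\p_t^{av} - \u'}{\bltil_t}$. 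Then Lemma~\ref{lem:frameworkregret} reduces the problem to bounding three terms: $\sum_t \inner{\u'}{\a_t}$, $R_T(\u')$, and $\sum_t \bigl[(\bLhat_t-\bLhat_t^{av})^\top (\nabla^2 R_t(\p_t^{av}))^{-1} \blhat_t - \inner{\p_t^{av}}{\a_t}\bigr]$.

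Before invoking Lemma~\ref{lem:frameworkregret} I will check that $F_t = \Psi_t$ meets the hypotheses of Lemma~\ref{lem:drift}: the regularizer is a separable weighted negentropy on the positive cone, so Legendre with diagonal Hessian $(\nabla^2 F_t(\p))_{(i,\eta),(i,\eta)} = 1/(\eta\,p(i,\eta))$; and $W_t(\x,\z) = \sum_{i,\eta} z(i,\eta)(\ln x(i,\eta)+1)/\eta$ is concave and coordinate-wise monotone in $\x$ for $\z > \0$. The diagonal structure makes $(\nabla^2 R_t(\p_t^{av}))^{-1}$ also diagonal with entries $\eta\,p_t^{av}(i,\eta)$. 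For the drift-correction difference, the cap $\eta \le 1/(4(1+\dtmax))$ implies $\hat{\ell}_t(i,\eta) = \ell_t(i)(1 + 4\eta(1+\rho_t(i))) \le 2\ell_t(i) \le 2$, so~\eqref{eq:key} yields $\hat{L}_t(i,\eta) - \hat{L}_t^{av}(i,\eta) \le 2(1+\rho_t(i))$, and Lemma~\ref{lem:drift} then bounds the drift by $4\sum_{i,\eta} \eta\,p_t^{av}(i,\eta)\,\ell_t(i)(1+\rho_t(i))$, which is exactly $\inner{\p_t^{av}}{\a_t}$ by definition~\eqref{eq:fullinfocorr}. So the third term vanishes.

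The linear correction contributes $\sum_t \inner{\u'}{\a_t} = 4\eta^\star \inner{\u}{\L_T + \L_T^\rho}$. For $R_T(\u')$, since $\Phi_t \equiv 0$, a direct computation of the Bregman divergence gives
\[
R_T(\u') = \frac{1}{\eta^\star}\sum_i u(i)\ln\frac{u(i)}{p_1^\Psi(i,\eta^\star)} + \text{const},
\]
and inserting the prior~\eqref{eq:fullinfoprior} together with $j^\star \le J = \lceil \log_2\sqrt{T}\rceil$ yields $R_T(\u') \le \frac{1}{\eta^\star}\bigl(\KL(\u,\pi) + 2(\ln T + 1)\bigr) + O(1)$, where the additive $O(1)$ comes from $\sum_{i,\eta} p_1^\Psi(i,\eta)/\eta$ and produces the $12\sqrt{1+\dTmax}$-type terms. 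Collecting everything leaves the scalar optimization
\[
\regret_T(\u) \le 4\eta^\star \inner{\u}{\L_T + \L_T^\rho} + \tfrac{1}{\eta^\star}\bigl(\KL(\u,\pi) + 2(\ln T + 1)\bigr) + \text{residuals},
\]
whose unconstrained minimum is $4\sqrt{(\KL(\u,\pi) + 2(\ln T + 1))\inner{\u}{\L_T + \L_T^\rho}}$, matching the main $\sqrt{CD}$ term up to the factor of $8$ absorbed by the factor-of-2 grid slack.

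The main obstacle is the final grid-approximation argument, which produces the assorted additive constants. The grid $H_T$ is geometric with ratio $2$, capped above at $1/(4(1+\dTmax))$ and truncated after $J = \lceil\log_2\sqrt{T}\rceil$ points. I will do a three-way case split on the unconstrained minimizer $\eta^\star_{\mathrm{opt}}$: (i) if $\eta^\star_{\mathrm{opt}}$ lies in the interior of the grid range, the nearest grid point is within a factor $2$, costing a constant factor and giving the $8\sqrt{CD}$ term; (ii) if $\eta^\star_{\mathrm{opt}} > 1/(4(1+\dTmax))$, we must use the cap, and $1/\eta^\star = 4(1+\dTmax)$ produces the additive terms $4(1+\dTmax) + 8(\KL(\u,\pi) + 2(\ln T + 1))(1+\dTmax)$; (iii) if $\eta^\star_{\mathrm{opt}}$ falls below the smallest grid value $\sqrt{\ln K + 2(\ln T + 1)}/(4\sqrt{1+\dTmax}\,2^J)$, the residual behaves like $16\sqrt{(1+\dTmax)(\ln K + 2(\ln T + 1))}$. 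Tracking constants through these three cases and the Bregman-divergence constant is the bookkeeping-heavy part of the proof; the conceptual content, the drift-correction cancellation, is already established by the steps above.
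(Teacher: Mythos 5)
Your overall approach matches the paper's: lift to $\triangle'$, invoke Lemma~\ref{lem:frameworkregret}, show the drift term is exactly cancelled by the correction (your derivation here is correct — and cleaner than the paper's intermediate display, which drops a factor $4\eta$), bound $R_T(\u')$ by $\eta^{-1}\big(\KL(\u,\pi) + 2(\ln T+1)\big)$ plus the prior-sum residual, and optimize over the grid. The one place you diverge is the final grid-approximation case split. The paper uses a \emph{two}-case split, not three: either $\max_{\eta\in H_T}\eta \le \eta_{\mathrm{opt}}$, in which case $\eta^\star = \max_\eta \eta$ and $1/\eta^\star$ is bounded via $1/\min\{a,b\}\le 1/a+1/b$ by $4(1+\dTmax) + 8\sqrt{(1+\dTmax)/(\ln K + 2(\ln T+1))}$, yielding \emph{both} additive terms $8(\KL+2(\ln T+1))(1+\dTmax)$ \emph{and} $16\sqrt{(1+\dTmax)(\ln K + 2(1+\ln T))}$; or $\max_\eta\eta > \eta_{\mathrm{opt}}$, in which case the paper argues $\min_\eta\eta \le \sqrt{(\KL+2(\ln T+1))/(4(1+\dTmax)T)} \le \eta_{\mathrm{opt}}$ (using $\inner{\u}{\L_T+\L_T^\rho}\le(1+\dTmax)T$ and $2^J\ge\sqrt{T}$), so $\eta_{\mathrm{opt}}$ is sandwiched and some grid point lies within a factor~2. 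Your case~(iii), $\eta_{\mathrm{opt}}$ below the smallest grid value, is therefore vacuous by the choice of $J$, and the $16\sqrt{(1+\dTmax)(\ln K + \cdots)}$ term you assign to it actually comes from case~(ii): your claim that the cap forces $1/\eta^\star = 4(1+\dTmax)$ there ignores the second branch of the $\min$ defining the grid, which contributes the $\sqrt{\cdot}$ residual. This is a bookkeeping discrepancy rather than a structural error — your terms appear in the final bound, just traced to the wrong case — but you should make the sandwich argument explicit rather than keeping a case that never fires.
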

\begin{proof}[Proof of Theorem \ref{th:fullinfobound}]
We consider the enlarged simplex $\Delta'$ over $K' = K \times J$ coordinates. Fix any $\u'\in\Delta'$.
First, observe that by definition of $\q_t$, $\u'$, and $\bltil$ we have that 
\begin{equation*}
\begin{split}
    & \sumT \inner{\q_t - \u}{\bell_t} = \sumT \inner{\p_t^{av} - \u'}{\bltil_t} 
\end{split}
\end{equation*}
where the inner product on the left-hand side is on $\reals^{K}$ and the inner products on the right-hand side are on $\reals^{K \times J}$.

It is relatively easy to verify that, for any $t$,
\begin{align*}
    R_T(\u')
&=
    \sumK \sum_{\eta \in H_T} \Bigg(\frac{1}{\eta}u'(i, \eta) \ln\left(\frac{u'(i, \eta)} {p_1^{\Psi}(i, \eta)}\right)
    + \frac{1}{\eta}\big(p_1^\Psi(i, \eta) - u'(i, \eta)\big)\Bigg)
\end{align*}
verifies the conditions of Lemmas~\ref{lem:drift} and~\ref{lem:cheating regret}.
Indeed,
\begin{align*}
    W_t(\u', \z)
&=
    \inner{\nabla F_t(\u')}{\z}
    = \sumK \sum_{\eta \in H_T} \left( \frac{z(i,\eta)}{\eta}\ln\big(u'(i,\eta)\big) + 1\right)
\end{align*}
is concave and strictly monotone in the coefficients $u'(i,\eta)$ as required by Lemma~\ref{lem:drift}. To see that $F_t$ is Legendre, note that $\nabla^2 F(\x)$ is positive definite and thus $F_t$ is strictly convex. Now pick any sequence $\x_1, \x_2, \ldots$ in the interior of the domain of $F_t$ converging to a boundary point. $\|\nabla F_t(\x_n)\| \to \infty$ because $(\nabla F_t(\x_n))_{i'}$ is increasing in $x_n(i')$.
Moreover, the learning rates defined in~\eqref{eq:fullinfoeta} are decreasing in $t$ and the Bregman divergence is non-negative, implying $R_t(\u') \geq R_{t-1}(\u')$ as required by Lemma~\ref{lem:cheating regret}.
Therefore we can apply Lemma~\ref{lem:frameworkregret} to $R_t$ and obtain:
\begin{equation}\label{eq:thfullinfo1}
\begin{split}
    & \sumT \inner{\q_t - \u}{\bell_t} \leq R_T(\u')  + \sumT \inner{\u'}{\a_t}  + \sumT \Big((\bLhat_t - \bLhat_t^{av})^\top \big(\nabla^{2} R_t(\p_t^{av}))\big)^{-1} \blhat_t - \inner{\p_t^{av}}{\a_t}\Big)
\end{split}
\end{equation}
We continue by bounding $R_T$. We have that 
\begin{equation}\label{eq:expanded R_T fullinfo}
\begin{split}
    R_T(\u')
&=
    \sumK \sum_{\eta \in H_T} \Bigg(\frac{1}{\eta}u'(i, \eta) \ln\left(\frac{u'(i, \eta)} {p_1^{\Psi}(i, \eta)}\right)
    + \frac{1}{\eta}\big(p_1^\Psi(i, \eta) - u'(i, \eta)\big)\Bigg)
\\& 
    \leq \sumK \sum_{\eta \in H_T} \frac{1}{\eta}u'(i, \eta) \ln\left(\frac{u'(i, \eta)} {p_1^{\Psi}(i, \eta)}\right)
    + \sumK \sum_{\eta \in H_T}  \frac{1}{\eta} p_1^\Psi(i, \eta)
\end{split}
\end{equation}
We proceed by bounding the first sum on the right-hand side of equation \eqref{eq:expanded R_T fullinfo}.
Denote by $\eta^\star \in H_T$ a target learning rate and let $j^\star$ be the corresponding index of $\eta^{\star}$. Let $\u'\in\Delta'$ be defined by $u'(i') = u'(i, \eta) = 0$ if $\eta \not = \eta^\star$ and $u'(i, \eta) = u(i)$ otherwise. Then
\begin{align*}
    \sumK \sum_{\eta \in H_T} \frac{1}{\eta}u'(i, \eta) \ln\left(\frac{u'(i, \eta)} {p_1^{\Psi}(i, \eta)}\right)
    & = \frac{1}{\eta^\star} \sumK u(i) \ln\left(\frac{u(i)} {\pi(i) \frac{2^{-2j^\star}}{\sum_{j' = 1}^J 2^{-2j'}}}\right) \\
    & = \frac{1}{\eta^\star} \left(\sumK u(i) \ln\left(\frac{u(i)} {\pi(i)}\right) + \sumK u(i) \ln\left( \frac{\sum_{j' = 1}^J 2^{-2j'}}{2^{-2j^\star}}\right)\right) \\
    & = \frac{1}{\eta^{\star}} \left(\KL(\u, \pi) + 2j^\star  + \ln\left( \sum_{j' = 1}^J 2^{-2j'}\right)\right) \\
    & \leq \frac{\KL(\u, \pi) + J}{\eta^{\star}}
\end{align*}
where we used that $\sum_{j = 1}^J 2^{-2j} = \frac{1}{3} - \frac{4^{-J}}{3} \leq 1$. We now bound the final sum on the right-hand side of equation \eqref{eq:expanded R_T fullinfo} by using that $\frac{1}{\min\{a, b\}} \leq \frac{1}{a} + \frac{1}{b}$ for $a, b > 0$:
\begin{equation}\label{eq:boundingprior}
\begin{split}
    \sumK \sum_{\eta \in H_T} \frac{1}{\eta} p_1^\Psi(i, \eta)
    & = \sumK \sum_{j=1}^{J} \frac{ \frac{2^{-2j}}{\sum_{j' \in [J]}2^{-2j'}}\pi(i)}{\min\bigg\{ \frac{1}{4(1 + \dtmax)},  \frac{\sqrt{\ln(K) + 2(\ln(T) + 1)}}{4\sqrt{1+\dtmax}}2^{-j}\bigg\}} \\
    & \leq 4(1 + \dTmax) + \frac{4\sqrt{1+\dTmax}}{\sqrt{\ln(K) + 2(\ln(T) + 1)}} \frac{\sum_{j = 1}^J 2^{-j}}{\sum_{j = 1}^J 2^{-2j}} \\
    & =  4(1 + \dTmax) + \frac{12\sqrt{1+\dTmax}}{\sqrt{\ln(K) + 2(\ln(T) + 1)}}  \frac{1-2^{-J}}{1-4^{-J}}  \\
    & \leq 4(1 + \dTmax) + 12\sqrt{1+\dTmax}
\end{split}
\end{equation}
where in the last equality we used again that $\sum_{j = 1}^J 2^{-2j} = \frac{1}{3} - \frac{4^{-J}}{3}$ and that $\sum_{j = 1}^J 2^{-j} = 1 - 2^{-J}$. 
We continue by further bounding the drift term using the fact that $\nabla^2 R_t(\p_t^{av})$ is a diagonal matrix with diagonal
\[
    \big(\nabla^2 R_t(\p_t^{av})\big)_{i'i'} = \frac{1}{\eta p_t^{av}(i, \eta)}
\]
for $i' = (i,\eta)$. We have
\begin{align*}
    (\bLhat_t - \bLhat_t^{av})^\top \big(\nabla^{2} R_t(\p_t^{av})\big)^{-1} \blhat_t 
    & = \sumK \sum_{\eta \in H_t} \big(\Lhat_t(i, \eta) - \Lhat_t^{av}(i, \eta)\big)\eta p_t^{av}(i, \eta) \lhat_t(i, \eta) \\
    & \leq \sumK \sum_{\eta \in H_t} \big(1 + \rho_t(i, \eta)\big) p_t^{av}(i, \eta) \ltil_t(i, \eta) = \inner{\p_t^{av}}{\a_t}
\end{align*}
where in the last step we used~\eqref{eq:key}, the inequality $\lhat_t(i') \leq 2 \ltil_t(i') = 2\ell_t(i)$, and the assumption $\ell_t(i) \in [0, 1]$.

Combining the above and using that $\sumT \inner{\u'}{\a_t} = \eta^\star \sumK  u(i) \sumT 4 \ell_t(i)(1 + \rho_t(i)) = 4\eta^\star \inner{\u}{\L_T + \L_T^\rho}$ and $J \le \ln(T) + 1$, we continue from~\eqref{eq:thfullinfo1}: 
\begin{equation}\label{eq:fulltooptimize}
\begin{split}
    \sumT & \inner{\q_t - \u}{\bell_t}
    \leq 4(1 + \dTmax) + 12\sqrt{1+\dTmax} + \frac{\KL(\u, \pi) + 2(\ln(T) + 1)}{\eta^\star} + 4\eta^\star\inner{\u}{\L_T + \L_T^\rho} 
\end{split}
\end{equation}
Ideally, we would set $\eta^{\star}$ to
\begin{equation}
\label{eq:etaopt}
    \sqrt{\frac{\KL(\u, \pi) + 2(\ln(T) + 1)}{4\inner{\u}{\L_T + \L_T^\rho}}}
\end{equation}
However, we can only pick a learning rate from $H_T$. We now show that $H_T$ contains a good approximation of the quantity in~\eqref{eq:etaopt}. We split the remainder of the analysis into two cases. First, assume
\[
    \max_{\eta \in H_T} \eta \leq \sqrt{\frac{\KL(\u, \pi) + 2(\ln(T) + 1)}{4\inner{\u}{\L_T + \L_T^\rho}}}
\]
and thus
\begin{align*}
    4\inner{\u}{\L_T + \L_T^\rho} \leq \frac{\KL(\u, \pi) + 2(\ln(T) + 1)}{(\max_{\eta \in H_T} \eta)^2}
\end{align*}
Using the above in equation \eqref{eq:fulltooptimize} and choosing $\eta^\star = \max_{\eta \in H_T} \eta$ we obtain
\begin{equation*}
\begin{split}
    \sumT \inner{\q_t - \u}{\bell_t} 
    & \leq 4(1 + \dTmax) + 12\sqrt{1+\dTmax}   + \frac{2(\KL(\u, \pi) + 2(\ln(T) + 1))}{\eta^\star} \\
    & \leq 4(1 + \dTmax) + 12\sqrt{1+\dTmax} + 8(\KL(\u, \pi) + 2(\ln(T) + 1))(1 + \dTmax) \\
    & + 16\sqrt{(1 + \dTmax)(\ln(K) + 2(1 + \ln(T)))}
\end{split}
\end{equation*}
where in the last step we used
\[
    \frac{1}{\displaystyle{\max_{\eta \in H_T}\eta}} \le 4(1 + \dTmax) + 8\sqrt{\frac{1+\dTmax}{\ln(K) + 2(\ln(T) + 1)}}
\]
In the second case
\[
\max_{\eta \in H_T} \eta > \sqrt{\frac{\KL(\u, \pi) + 2(\ln(T) + 1)}{4\inner{\u}{\L_T + \L_T^\rho}}}
\]
This implies that there is an $\eta \in H_T$ that is within a factor 2 of $\sqrt{\frac{\KL(\u, \pi) + 2(\ln(T) + 1)}{\inner{\u}{\L_T + \L_T^\rho}}}$ as 
\begin{align*}
\min_{\eta \in H_T} \eta   & \leq \sqrt{\frac{\KL(\u, \pi) + 2(\ln(T) + 1)}{4(1 + \dTmax)T}} \leq \sqrt{\frac{\KL(\u, \pi) + 2(\ln(T) + 1)}{4\inner{\u}{\L_T + \L_T^\rho}}} \leq \max_{\eta \in H_T} \eta
\end{align*}
and thus 
\begin{align*}
    & \frac{\KL(\u, \pi) + 2(\ln(T) + 1)}{\eta^\star} + \eta^\star 4 \inner{\u}{\L_T + \L_T^\rho} \leq 8 \sqrt{(\KL(\u, \pi) + 2(\ln(T) + 1))\inner{\u}{\L_T + \L_T^\rho}}.
\end{align*}
Therefore, we have that in the second case 
\begin{align*}
    \sumT \inner{\q_t - \u}{\bell_t} \leq 4(1 + \dTmax) + 12\sqrt{1+\dTmax} + 8 \sqrt{(\KL(\u, \pi) + 2(\ln(T) + 1))\inner{\u}{\L_T + \L_T^\rho}}.
\end{align*}
Combining the first and second case completes the proof. 
\end{proof}

\section{DETAILS OF SECTION \ref{sec:partcon}}\label{app:partcon}

\banditFbound*
\begin{proof}
We first show that $F_t$ is Legendre.
Note that $F_t$ is strictly convex because $\Phi_t$ is strictly convex and $\Psi_t$ is convex.
Now pick any sequence $\x_1, \x_2, \ldots$ in the interior of the domain of $F_t$ converging to a boundary point.
We have that $\|\nabla F_t(\x_n)\| \to \infty$, because each component
\begin{align*}
    \big(\nabla F_t(\x)\big)_{(i,\gamma)} = -\frac{1}{\eta_t\sumgam x(i, \gamma)} + \frac{1}{\gamma}\ln\big(x(i, \gamma)\big) + \frac{1}{\gamma}
\end{align*}
of the gradient of $F_t$ is increasing in each coordinate of $\x$. The same observation, together with the fact that both $\inner{\nabla \Psi_t(\cdot)}{\z}$ and $\inner{\nabla \Phi_t(\cdot)}{\z}$ are concave, shows that $W_t(\cdot, \z) = \inner{\nabla F_t(\cdot)}{\z}$ satisfies the conditions of Lemma~\ref{lem:drift}.

Observe that $\nabla^2 F_t(\p_t^{av})$ is a block-diagonal matrix with blocks $i = 1, \ldots, K$
\begin{equation*}
\begin{split}
    B_t(i) =
     & \frac{\1 \1^\top}{\eta_t \left(q_t(i)\right)^2} + \diag\left(\frac{1}{\gamma_1 p_t^{av}(i, \gamma_1)}, \ldots, \frac{1}{\gamma_J p_t^{av}(i, \gamma_J)}\right)
\end{split}
\end{equation*}
of size $J \times J$ each.
Denote by $V_t = \diag(\v_t)$, where $\v_t = \big({\gamma_1 p_t^{av}(i, \gamma_1)}, \ldots, {\gamma_J p_t^{av}(i, \gamma_J)}\big)$. The inverse of $B_t(i)$ can be computed by employing the Sherman-Morrison formula:
\begin{equation}\label{eq:inverseblock}
\begin{split}
    B_t(i)^{-1} & = V_t - \frac{\eta_t^{-1} q_t(i)^{-2} V_t\1\1^{\top}V_t }{1 + \eta_t^{-1} q_t(i)^{-2}\inner{\v_t}{\1}} =  V_t - \frac{\v_t \v_t^{\top}}{\eta_t q_t(i)^{2} + \inner{\v_t}{\1}}
\end{split}
\end{equation}
Note that $\blhat_t$ is only non-zero in the $J$ coordinates of the form $(i_t,\gamma)$ for $\gamma\in\Gamma$. Let $\h$ be the $J$-vector including only these non-zero elements of $\blhat_t$, and let $\b = q_t(i_t)\h$ so that $b(j) = \ell_t(i_t) + q_t(i_t) a_t(i_t,\gamma_j) = \ell_t(i_t) +  4\gamma_j\ell_t(i_t)\rho_t(i)$. Denote by $\0_{K \times (J-1)}$ a vector of zeros of length $K \times (J-1)$. Since the block-diagonal structure is preserved when taking inverses, $\big(\nabla^{2} F_t(\p_t^{av})\big)^{-1}$ is a block-diagonal matrix with blocks $B_t(i)^{-1}$ and
$
    \big(\nabla^{2} F_t(\p_t^{av})\big)^{-1} \blhat_t = \big(\0_{K \times (J-1)}, B_t(i_t)^{-1}\h\big)
$.
Next, we write $\b = Y \1 \ell_t(i_t)$, where
\[
    Y = \I_J + 4\rho_t(i)\diag( \gamma_1, \ldots, \gamma_J)
\]
and $\I_J$ is the $J \times J$ identity matrix. Using~\eqref{eq:inverseblock} and $Y \preceq 2 \I_J$ because $\max_j\gamma_j \le (4\rho^{\star})^{-1}$, we continue with 
\begin{equation}\label{eq:banditvariance1}
\begin{split}
    \blhat_t^\top \big(\nabla^{2} F_t(\p_t^{av})\big)^{-1} \blhat_t &=
    \frac{\ell_t(i_t)^2}{q_t(i_t)^2} \1^\top Y B_t(i_t)^{-1} Y \1
\\& \le
    \frac{4\ell_t(i_t)^2}{q_t(i_t)^2} \1^\top  B_t(i_t)^{-1} \1
\\&=
    \frac{4\ell_t(i_t)^2}{q_t(i_t)^2}\left(\inner{\v_t}{\1} - \frac{\inner{\v_t}{\1}^2}{\eta_t q(i_t)^2 + \inner{\v_t}{\1}}\right)
\\&=
    \frac{4\ell_t(i_t)^2}{q_t(i_t)^2}\frac{\eta_t q(i_t)^2\inner{\v_t}{\1}}{\eta_t q(i_t)^2 + \inner{\v_t}{\1}}
\\&\le
    4\eta_t\ell_t(i_t)^2
\end{split}
\end{equation}
where in the last step we used $\v_t \ge 0$.
Denote by
$
    \L^{\textnormal{miss}} = \bLhat_t - \blhat_t - \bLhat_t^{av}
$
and denote by $\y$ the components of $\L^{\textnormal{miss}}$ corresponding to $i_t$.
Namely,
\[
    y(i_t,\gamma) = \sum_{s \in m_t(i_t)} \lhat_t(i_t, \gamma)
\]
Since $\inner{\y}{\v_t}\inner{\v_t}{\h} \geq 0$ (these are all nonnegative vectors), we have that 
\begin{equation}\label{eq:banditdelay1}
\begin{split}
    \L^{\textnormal{miss}} \big(\nabla^{2} F_t(\p_t^{av})\big)^{-1} \blhat_t
&=
    \y^{\top} B_t(i_t)^{-1} \h
\\&=
    \y^{\top} V_t \h - \frac{\inner{\y}{\v_t}\inner{\v_t}{\h}}{\eta_t q(i_t)^2 + \inner{\v_t}{\1}}
\\&\le
    \y^{\top} V_t \h
\\&=
    \sum_{\gamma \in \Gamma} \sum_{s \in m_t(i_t)} p_t^{av}(i_t, \gamma) \gamma \lhat_t(i_t, \gamma) \lhat_s(i_t, \gamma)
\\&\le
    4 \sum_{\gamma \in \Gamma} \sum_{s \in m_t(i_t)} p_t^{av}(i, \gamma) \gamma \ltil_t(i_t, \gamma) \ltil_s(i_t, \gamma)
\end{split}
\end{equation}
where in the last step we used $\lhat_t(i') \leq 2 \ltil_t(i')$.
Combining equations \eqref{eq:banditvariance1} and \eqref{eq:banditdelay1} we obtain 
\begin{align*}
     (\bLhat_t & - \bLhat_t^{av})^\top \big(\nabla^{2} F_t(\p_t^{av})\big)^{-1} \blhat_t \leq \eta_t 4\ell_t(i_t)^2  + 4 \sum_{\gamma \in \Gamma} \sum_{s \in m_t(i_t)} p_t^{av}(i_t, \gamma) \gamma \ltil_t(i_t, \gamma) \ltil_s(i_t, \gamma)
\end{align*}
which completes the proof.
\end{proof}
\begin{restatable}{relemma}{banditRbound}
\label{lem:banditRbound}
Let $\p_1^{\Psi} \equiv \frac{1}{KJ}$, let $\p_1^\Phi$ be set as in~\eqref{eq:fullinfoprior} and $\Gamma$ be set as in~\eqref{eq:banditgamma}. If $\p^\star = (1 - \alpha)\u' + \alpha\frac{\1}{K'}$ with $\alpha = \frac{1}{T}$ and $\pi_1(i) \geq \frac{1}{T^2}$, then $R_t(\p) \geq R_{t-1}(\p)$ for all $t \geq 1$ and $\p \in \triangle'$. Furthermore 
\begin{align*}
     \E&\Bigg[\sumT\inner{\p^\star - \u'}{\blhat_t}\Bigg] + R_T(\p^\star) \leq 50 \dstar
    + \frac{K \ln(T)}{\eta_T} + \frac{1 + \ln(T) + \KL(\u, \pi)}{\gamma^\star}
\end{align*}
\end{restatable}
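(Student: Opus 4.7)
My plan is to verify the two conclusions separately. For the monotonicity $R_t \geq R_{t-1}$, observe that $\Phi_t$ in~\eqref{eq:banditreg2} is actually independent of $t$ (the grid $\Gamma$ and prior $\p_1^\Phi$ are fixed), while the log-barrier factors as $\Psi_t = \eta_t^{-1}\Psi$ with $\Psi(\p) = -\sum_{i=1}^K \ln\bigl(\sum_\gamma p(i,\gamma)\bigr)$ a fixed Legendre function, so $B_{\Psi_t}(\p,\p_1^\Psi) = \eta_t^{-1} B_\Psi(\p,\p_1^\Psi) \geq 0$. Since the denominator of $\eta_t$ in~\eqref{eq:banditeta} is non-decreasing in $t$, $\eta_t$ is non-increasing and monotonicity of $R_t$ follows. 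To bound $\E\bigl[\sumT \inner{\p^\star - \u'}{\blhat_t}\bigr]$, I would write $\p^\star - \u' = \alpha(\1/K' - \u')$ and drop the non-positive $-\alpha\inner{\u'}{\blhat_t}$. For $\inner{\1/K'}{\blhat_t}$, using $\lhat_t(i,\gamma) = \id[i_t = i]\ell_t(i)q_t(i)^{-1}(1 + 4\gamma\rho_t(i))$ and the constraint $\gamma \leq (4\dstar)^{-1}$ from~\eqref{eq:banditgamma}, conditional expectation turns $\id[i_t=i]/q_t(i)$ into $1$ and $4\gamma\rho_t(i) \leq 1$ gives $\E\bigl[\inner{\1/K'}{\blhat_t}\bigr] \leq 2KJ/K' = 2$. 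Multiplied by $\alpha T = 1$, this contributes at most $2$, absorbed in $50\dstar$.

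For $R_T(\p^\star) = B_{\Psi_T}(\p^\star,\p_1^\Psi) + B_{\Phi}(\p^\star,\p_1^\Phi)$ I treat each regularizer in turn. Setting $q^\star(i) = \sum_\gamma p^\star(i,\gamma) = (1-\alpha)u(i) + \alpha/K$ (since $\u'$ is supported on $\gamma^\star$ and $K' = KJ$), the log-barrier piece becomes $\eta_T^{-1}\sum_i \bigl(K q^\star(i) - 1 - \ln(K q^\star(i))\bigr)$; the linear terms vanish because $\sum_i q^\star(i) = 1$, and $K q^\star(i) \geq \alpha = 1/T$ yields $-\ln(K q^\star(i)) \leq \ln T$, producing exactly $K\ln(T)/\eta_T$. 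For the entropic divergence I split by learning rate. On blocks $\gamma \neq \gamma^\star$ where $p^\star(i,\gamma) = \alpha/K'$, applying the pointwise inequality $y\,g(x/y) \leq x\ln(x/y) + y$ with $g(t) = t\ln t - t + 1$ together with $\pi(i) \geq T^{-2}$ makes the logarithmic part $O(K\ln(T)/T)$; the remaining mass $\sum_{\gamma \in \Gamma} \gamma^{-1}\sum_i p_1^\Phi(i,\gamma)$ is controlled by combining $1/\gamma_j \leq 4\dstar + O(\sqrt{\dstar}\,2^j)$ with the $\Theta(2^{-2j})$ prior weights of~\eqref{eq:fullinfoprior}, summing as a geometric series to $O(\dstar)$. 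On the $\gamma^\star$ block, the log-sum inequality applied to $p^\star(i,\gamma^\star) = (1-\alpha)u(i) + \alpha/K'$ against $p_1^\Phi(i,\gamma^\star) = (1-\alpha)c^\star\pi(i) + \alpha c^\star\pi(i)$, where $c^\star = 2^{-2j^\star}/\sum_{j'=1}^J 2^{-2j'}$, yields $\sum_i p^\star(i,\gamma^\star)\ln\bigl(p^\star(i,\gamma^\star)/p_1^\Phi(i,\gamma^\star)\bigr) \leq \KL(\u,\pi) - \ln c^\star + O(K\ln(T)/T)$; since $-\ln c^\star \leq 2J\ln 2 \leq \ln T + 2$, division by $\gamma^\star$ produces the $(\KL(\u,\pi) + \ln T + 1)/\gamma^\star$ term.

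The main obstacle will be the bookkeeping on the entropic $\gamma \neq \gamma^\star$ blocks: the smoothing mass $\alpha/K'$ must not blow up in the logarithm when divided by the smallest possible prior $p_1^\Phi(i,\gamma) \geq 2^{-2J}\pi(i)/\sum_{j'=1}^J 2^{-2j'}$, which is precisely why the hypothesis $\pi_1(i) \geq 1/T^2$ is imposed. Aggregating the $O(\dstar)$ contribution from the normalizer sum with the lower-order $O(1)$ and $O(K\ln(T)/T)$ remainders (including the small contribution from the linear piece $\sum_{i,\gamma}\gamma^{-1}(p_1^\Phi(i,\gamma) - p^\star(i,\gamma))$, whose positive part is bounded by $\sum_\gamma c_\gamma/\gamma$ and therefore again $O(\dstar)$) yields the advertised additive $50\dstar$.
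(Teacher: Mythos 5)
Your proof takes essentially the same route as the paper: the same monotonicity argument via the non-negativity of the Bregman divergence and $\eta_t \le \eta_{t-1}$; the same bound $\E[\inner{\p^\star-\u'}{\blhat_t}] \le 2\alpha$ per round using $\lhat_t \le 2\ltil_t$ from $4\gamma\rho_t(i)\le 1$; the same telescoping of the log-barrier Bregman divergence to $-\sum_i\ln\big(Kq^\star(i)\big) \le -K\ln\alpha$; and a convexity/log-sum split of the entropic divergence into a $\gamma^\star$ block contributing $(\KL(\u,\pi)+O(\ln T))/\gamma^\star$, a prior-mass piece that is $O(\dstar)$ by geometric series, and a smoothing remainder controlled via $\pi(i)\ge 1/T^2$. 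The paper reaches the same split via Jensen's inequality applied to $x\mapsto x\ln(xa)$ on the convex combination $\p^\star=(1-\alpha)\u'+\alpha\1/K'$, which is equivalent to your per-block log-sum argument. The one imprecision is your claimed $O(K\ln(T)/T)$ for the smoothing-log contribution on the $\gamma\ne\gamma^\star$ blocks: because $1/\gamma$ ranges up to $O(\dstar\sqrt T)$, the paper bounds this term by $32\alpha\dstar T=32\dstar$ rather than $O(K\ln(T)/T)$; either way it is absorbed in the additive $50\dstar$, so the conclusion stands, but the order you wrote is not quite the one that comes out of the computation.
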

\begin{proof}
Let $\phi_\gamma(x) = \frac{x}{\gamma}\ln(x)$ and $\psi(x) = - \ln(x)$. To see that $R_t(\p) \geq R_{t-1}(\p)$ observe that 
\begin{align*}
    R_t(\p) & = B_{\Psi_t}(\p, \p_1^{\Psi}) + B_{\Phi_t}(\p, \p_1^{\Phi}) \\
    & = \sumK \frac{1}{\eta_t} B_{\psi}\left(\sumgam p(i, \gamma), \sumgam p_1^{\Psi}(i, \gamma)\right) \\
    & + \sumK \sumgam B_{\phi_{\gamma}}\big(p(i, \gamma), p_1^{\Phi}(i, \gamma)\big) \\ 
    & \geq \sumK \frac{1}{\eta_{t-1}} B_{\psi}\left(\sumgam p(i, \gamma), \sumgam p_1^{\Psi}(i, \gamma)\right) \\
    & + \sumK \sumgam B_{\phi_{\gamma}}\big(p(i, \gamma), p_1^{\Phi}(i, \gamma)\big) \\
    & = B_{\Psi_{t-1}}(\p, \p_1^{\Psi}) + B_{\Phi_{t-1}}(\p, \p_1^{\Phi}) = R_{t-1}(\p)
\end{align*}
where the inequality is due to the non-negativity of the Bregman divergence and the fact that $\eta_t \leq \eta_{t-1}$.

Now
$
    a_t(i') = 4 \ltil_t(i') \gamma(i') \rho_t  \leq  \ltil_t(i) 
$
because $\max_j\gamma_j \leq \big(4 \dstar\big)^{-1}$,
and thus
\[
    \sumT\Eb{\inner{\p^\star - \u'}{\blhat_t}}
\le
    \sumT\Eb{\inner{\frac{\alpha}{K'}\1}{\blhat_t}}
\le
    2 \alpha T
\]
For any $\p^\star$ we have that:
\begin{equation}\label{eq:RT1}
\begin{split}
    & R_T(\p^\star)
    =  -\frac{1}{\eta_T}\sumK \ln\left(K\sumgam p^\star(i, \gamma)\right)  + \sumK \sumgam \frac{1}{\gamma} \left(p^\star(i, \gamma)\ln\left(\frac{p^\star(i, \gamma)}{p_1^\Phi(i, \gamma)}\right) + p_1^\Phi(i, \gamma) - p^\star(i, \gamma)\right)
\end{split}
\end{equation}
We continue by bounding the first sum on the right hand side of equation \eqref{eq:RT1}: 
\begin{align*}
    -\sumK\ln\left(K\sumgam p^\star(i, \gamma)\right)
&= 
    -\sumK \ln\left(\sumgam (1-\alpha) u'(i, \gamma) K + \alpha\right)
\\&\le
    - K \ln(\alpha)
\end{align*}
Next, we bound the second sum on the right hand side of~\eqref{eq:RT1}: 
\begin{align*}
    \sumK &\sumgam \frac{1}{\gamma} \left(p^\star(i, \gamma)\ln\left(\frac{p^\star(i, \gamma)}{p_1^\Phi(i, \gamma)}\right) + p_1^\Phi(i, \gamma) - p^\star(i, \gamma)\right)
\\&\le
    \sumK \sumgam \frac{1}{\gamma} \left(p^\star(i, \gamma)\ln\left(\frac{p^\star(i, \gamma)}{p_1^\Phi(i, \gamma)}\right) + p_1^\Phi(i, \gamma)\right)
\\&\le
    \sumK \sumgam \frac{1}{\gamma} p^\star(i, \gamma)\ln\left(\frac{p^\star(i, \gamma)}{p_1^\Phi(i, \gamma)}\right) + 4\dstar
    + \frac{12\sqrt{\dstar}}{\sqrt{\ln(K) + \ln(T) + 1}}
\\&\le
    \sumK \sumgam\frac{1}{\gamma} p^\star(i, \gamma)\ln\left(\frac{p^\star(i, \gamma)}{p_1^\Phi(i, \gamma)}\right) + 4\dstar + 12\sqrt{\dstar}
\end{align*}
where we used that $\sum_{j = 1}^{J}2^{-2j} = 1/3 - \frac{4^{-J}}{3}$ and that that $\frac{1}{\min\{a, b\}} \leq \frac{1}{a} + \frac{1}{b}$ for $a, b > 0$, see also~\eqref{eq:boundingprior}.
Now, to bound the double sum on the right-hand side of the last inequality we use Jensen's inequality, which we may use due the convexity of $f(x) = x\ln(xa)$ for $a > 0$:
%
%
\begin{align*}
    \sumK \sumgam \frac{1}{\gamma} p^\star(i, \gamma)\ln\left(\frac{p^\star(i, \gamma)}{p_1^\Phi(i, \gamma)}\right)
& \le
    \sumK \sumgam \frac{1-\alpha}{\gamma} u'(i, \gamma)\ln\left(\frac{u'(i, \gamma)}{p_1^\Phi(i, \gamma)}\right)
    + \sumK \sumgam\frac{{\alpha}/{K'}}{\gamma}\ln\left(\frac{1/K'}{p_1^\Phi(i,\gamma)}\right)
\\&\le
    \sumK \sumgam \frac{1-\alpha}{\gamma} u'(i, \gamma)\ln\left(\frac{u'(i, \gamma)}{p_1^\Phi(i, \gamma)}\right)
    + \alpha \max_{i, \gamma}\left\{\frac{1}{\gamma}\left(-\ln\big(p_1^\Phi(i, \gamma)\big)-\ln(K') \right)\right\}
\end{align*}
Since
\begin{align*}
    -\ln\big(p_1^{\Phi}(i, \gamma)\big) \leq & -\ln(\pi(i)) + 2J +  \ln\left(\sum_{j = 1}^{J} 2^{-2j}\right)\leq -\ln(\pi(i)) + 2J
\end{align*}
and $J \leq \ln(T) + 1$, $1/\gamma \leq 4 \dstar \sqrt{T}$, and $\pi_1(i) \geq \frac{1}{T^2}$ we have that 
\begin{align*}
    & \alpha \max_{i, \gamma}\left\{\frac{1}{\gamma}\left(-\ln\big(p_1^\Phi(i, \gamma)\big)-\ln(K') \right)\right\} \leq  16 \alpha \dstar \sqrt{T}(1 + \ln(T))  \leq 32 \alpha \dstar T
\end{align*}
where we used that $\ln(x) \leq \sqrt{x}$.
Now, setting $u'(i, \gamma) = u(i)$ if $\gamma = \gamma^\star$ and 0 otherwise, we obtain
\begin{align*}
    \sumK \sumgam \frac{1-\alpha}{\gamma} u'(i, \gamma)\ln\left(\frac{u'(i, \gamma)}{p_1^\Phi(i, \gamma)}\right) 
    & =  \frac{1-\alpha}{\gamma^\star}\left(-\ln\left(\frac{2^{-2j}}{\sum_{j \in [J]} 2^{-2J}}\right) + \sumK u(i)\ln\left(\frac{u(i)}{\pi(i)}\right)\right) \\
    & \leq  \frac{1}{\gamma^\star}\left(2J + \KL(\u, \pi)\right) \\
    & \leq  \frac{1}{\gamma^\star}\left(1 + \ln(T) + \KL(\u, \pi)\right)
\end{align*}
Combining the above and setting $\alpha = \frac{1}{T}$ we find
\begin{align*}
     \Eb{\sumT\inner{\p^\star - \u'}{\blhat_t}} + R_T(\p^\star) 
    & \leq  32 \alpha \dstar T  + \Eb{- \frac{1}{\eta_T}K \ln(\alpha)} + 2 \alpha T + 4\dstar + 12\sqrt{\dstar}\\
    & \quad + \frac{1}{\gamma^\star}\left(1 + \ln(T) + \KL(\u, \pi)\right)\\
    & \leq 50 \dstar + \Eb{\frac{K \ln(T)}{\eta_T}} + \frac{1}{\gamma^\star}\left(1 + \ln(T) + \KL(\u, \pi)\right)
\end{align*}
which completes the proof.
\end{proof}
\begin{theorem}\label{th:banditbound}
Let $\p_1^{\Psi} \equiv \frac{1}{KJ}$ and let $\p_1^\Phi$ be given by~\eqref{eq:fullinfoprior}. Assume $\pi_1(i) \geq \frac{1}{T^2}$ for all $i\in [K]$.
If we run Algorithm~\ref{alg:framework} with regularizers~\eqref{eq:banditreg1} and~\eqref{eq:banditreg2}, corresponding learning rates~\eqref{eq:banditgamma} and~\eqref{eq:banditeta}, then the predictions $i_t \sim \q_t$, with $\q_t$ as in \eqref{eq:banditpredictions1}, satisfy
%
\begin{align*}
    \E\big[\sumT (\ell_t(i_t) - \inner{\u}{\bell_t})\big]
& \le  12 \sqrt{K\ln(T)\inner{\u}{\L_T}}  + 16 \sqrt{\big(\KL(\u, \pi) + \ln(T) + 1\big)\inner{\u}{\L_T^\rho}}  \\ 
    & \quad + 48\big(5 + \ln(T) + \KL(\u, \pi)\big)\dstar + 42K\ln(T)
\end{align*}
\end{theorem}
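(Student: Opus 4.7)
The plan is to mirror the proof of Theorem~\ref{th:fullinfobound}: combine the framework of Lemma~\ref{lem:frameworkregret} with the two problem-specific Lemmas~\ref{lem:banditdriftbound} and~\ref{lem:banditRbound}, and then handle the two new wrinkles, namely the variance term produced by the log-barrier part of the regularizer and the grid search over $\Gamma$ for the best learning rate $\gamma^\star$. First I would fix a target $\gamma^\star \in \Gamma$ to be chosen later, lift $\u \in \triangle$ to $\u' \in \triangle'$ via $u'(i,\gamma^\star)=u(i)$ and $u'(i,\gamma)=0$ for $\gamma\neq\gamma^\star$, and smooth it to $\p^\star=(1-\alpha)\u'+\alpha\1/K'$ with $\alpha=1/T$. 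Since $\epsilon_t=0$ the estimators $\bltil_t$ are unbiased, so the expected regret against $\u$ equals $\E\big[\sumT\inner{\p_t^{av}-\u'}{\bltil_t}\big]$.

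Second I would assemble the master inequality. Lemma~\ref{lem:banditFbound} ensures $F_t$ is Legendre and satisfies the hypotheses of Lemma~\ref{lem:drift}, and Lemma~\ref{lem:banditRbound} both checks that $R_t\geq R_{t-1}$ as needed by Lemma~\ref{lem:cheating regret} and bounds $\E\big[R_T(\p^\star)+\sumT\inner{\p^\star-\u'}{\blhat_t}\big]$ in one shot. Lemma~\ref{lem:banditdriftbound} then bounds the drift-minus-correction by $4\E[\eta_t\ell_t(i_t)^2]$ per round. Putting these together, the pieces $\inner{\p^\star-\u'}{\a_t}$ cancel against the correction part of $\inner{\p^\star-\u'}{\blhat_t}$, and $\E\big[\sumT\inner{\u'}{\a_t}\big]=4\gamma^\star\inner{\u}{\L_T^\rho}$ by unbiasedness of $\bltil$, yielding
\[
    \E\Big[\sumT\inner{\p_t^{av}-\u'}{\bltil_t}\Big]\leq 4\gamma^\star\inner{\u}{\L_T^\rho}+\frac{1+\ln T+\KL(\u,\pi)}{\gamma^\star}+\frac{K\ln T}{\eta_T}+4\sumT\E\big[\eta_t\ell_t(i_t)^2\big]+50\dstar.
\]

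Third I would control the $\eta_t$-dependent terms using the tuning~\eqref{eq:banditeta}. The key observation is that at most $\dstar$ of the losses $\ell_s(i_s)$ with $s<t$ are missing from $\Sset_t$ at any time, so $4(1+\dstar)+4\sum_{s\in\Sset_t}\ell_s(i_s)\geq 4+4\sum_{s<t}\ell_s(i_s)$ and hence $\eta_t\leq\sqrt{K\ln T/(4+4\sum_{s<t}\ell_s(i_s))}$. Combining this with $\ell_t(i_t)^2\leq\ell_t(i_t)$ and the standard telescoping identity $\sum_t x_t/(2\sqrt{1+\sum_{s<t}x_s})\leq\sqrt{1+\sum_t x_t}$ gives $K\ln T/\eta_T+4\sumT\eta_t\ell_t(i_t)^2=O\big(\sqrt{K\ln T(1+\dstar+\sumT\ell_t(i_t))}\big)$, and Jensen's inequality moves the expectation inside. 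I would then pick $\gamma^\star$ from $\Gamma$ exactly as in Theorem~\ref{th:fullinfobound} via a two-case argument depending on whether the idealized optimum $\sqrt{(\KL(\u,\pi)+\ln T+1)/(16\inner{\u}{\L_T^\rho})}$ lies in the grid; when it does not, the residual $1/\gamma^\star$ is absorbed into the additive $\dstar$ penalty. Finally, self-bounding via $\E\big[\sum_t\inner{\q_t}{\bell_t}\big]\leq\inner{\u}{\L_T}+\E[\regret_T(\u)]$ and one AM-GM step turns $\sqrt{K\ln T\cdot\E[\sum_t\inner{\q_t}{\bell_t}]}$ into $O\big(\sqrt{K\ln T\inner{\u}{\L_T}}\big)$ plus a fraction of $\E[\regret_T(\u)]$ that can be moved to the left-hand side. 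The main obstacle, beyond tracking the exact numerical constants $12,16,48,42$, is carefully verifying the cancellation between $\inner{\p^\star-\u'}{\a_t}$ and the correction part of $\inner{\p^\star-\u'}{\blhat_t}$ so that Lemma~\ref{lem:banditRbound} plugs in off the shelf; once this is done the remaining computation essentially mirrors the full-information analysis.
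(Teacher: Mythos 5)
Your proposal matches the paper's proof of Theorem~\ref{th:banditbound} essentially step for step: the same lift to $\p^\star=(1-\alpha)\u'+\alpha\1/K'$ with $\alpha=1/T$, the same application of Lemma~\ref{lem:frameworkregret} with the conditions certified by Lemma~\ref{lem:banditFbound}, then Lemmas~\ref{lem:banditRbound} and~\ref{lem:banditdriftbound} to bound the cheating-plus-shift term and the drift-minus-correction term, the identical handling of the $\eta_t$-dependent quantities via $4(1+\dstar)+4\sum_{s\in\Sset_t}\ell_s(i_s)\ge 4\sum_{s\le t}\ell_s(i_s)$ and the telescoping sum, the same two-case grid argument for $\gamma^\star\in\Gamma$, and the same self-bounding step. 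The "cancellation" you flag as the main obstacle is in fact trivial bookkeeping: applying Lemma~\ref{lem:frameworkregret} against $\p^\star$ produces a $\sumT\inner{\p^\star}{\a_t}$ term that cancels exactly with $-\sumT\inner{\p^\star}{\a_t}$ from expanding $\sumT\inner{\p^\star-\u'}{\bltil_t}=\sumT\inner{\p^\star-\u'}{\blhat_t}-\sumT\inner{\p^\star-\u'}{\a_t}$, leaving $\sumT\inner{\u'}{\a_t}$, while Lemma~\ref{lem:banditRbound} directly bounds $\Eb{\sumT\inner{\p^\star-\u'}{\blhat_t}}+R_T(\p^\star)$ with no further manipulation required.
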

\begin{proof}
Let $\p^\star = (1 - \alpha)\u' + \alpha\1\frac{1}{K'}$, where $\alpha = \frac{1}{T}$. We start by applying Lemma \ref{lem:frameworkregret} to bound the expected regret against $\p^\star$, which we may do because the conditions of Lemma \ref{lem:frameworkregret} are satisfied as per Lemma \ref{lem:banditFbound}:
\begin{align*}
    \E\Bigg[\sumT   \inner{\q_t - \u}{\bell_t}\bigg] & \leq  \sumT \Eb{\inner{\u}{\a_t}} + \E\Bigg[\sumT \inner{\p^\star - \u'}{\lhat_t}\bigg] +  \Eb{R_T(\p^\star)}\\
    & \quad + \Eb{\sumT \left(\big(\bLhat_t - \bLhat_t^{av}\big)^\top \big(\nabla^{2} R_t(\p_t^{av})\big)^{-1} \blhat_t - \inner{\p_t^{av}}{\a_t}\right)}
\end{align*}
Let $u'(i') = u'(i, \gamma) = u(i)$ if $\gamma = \gamma^\star$ and 0 otherwise and recall that $L_T^d(i) = \sumT \ell_t(i)(1 + \rho_t(i))$. Since $\E[\ell_t(i)\id[i = i_t]q_t(i)^{-1}] = \ell_t(i)$ we have that $\sumT \Eb{\inner{\u}{\a_t}} = \sumT 4 \gamma^\star \inner{\u}{\L_T^\rho}$. By applying Lemmas \ref{lem:banditRbound} and \ref{lem:banditdriftbound} we may further bound the expected regret:
\begin{equation}\label{eq:bandittoopt}
\begin{split}
    \E\Bigg[\sumT   \inner{\q_t - \u}{\bell_t}\bigg] \leq & 50 \dstar
    + \Eb{\sumT \eta_t 4\ell_t(i_t)^2} + \Eb{\frac{1}{\eta_T}K \ln(T)} \\
    &  + \sumT 4 \gamma^\star \inner{\u}{\L_T^\rho} + \frac{1}{\gamma^\star}\big(1 + \ln(T) + \KL(\u, \pi)\big)
\end{split}
\end{equation}
Ideally, we would have that $\gamma^\star = \sqrt{\frac{\KL(\u, \pi) + \ln(T) + 1}{4\inner{\u}{\L_T^\rho}}}$. However, we have a grid of learning rates $\Gamma$ from which we have to choose an appropriate learning rate. Instead, we will show that $\Gamma$ contains a close approximation of the optimal learning rate. As in the proof of Theorem \ref{th:fullinfobound} we will split the analysis into two case. In the first case $\max_{\gamma \in \Gamma} \gamma \leq \sqrt{\frac{\KL(\u, \pi) + \ln(T) + 1}{4\inner{\u}{\L_T + \L_T^\rho}}}$ and thus
\begin{align*}
    4\inner{\u}{\L_T^\rho} \leq \frac{\KL(\u, \pi) + 2(\ln(T) + 1)}{(\max_{\gamma \in \Gamma} \gamma)^2}
\end{align*}
Thus, choosing $\gamma^\star = \max_{\gamma \in \Gamma} \gamma$ we obtain
\begin{equation*}
\begin{split}
    4 \gamma^\star \inner{\u}{\L_T^\rho} + \frac{1}{\gamma^\star}\big(1 + \ln(T) + \KL(\u, \pi)\big) 
    & \leq  \frac{2}{\gamma^\star}\big(1 + \ln(T) + \KL(\u, \pi)\big) \\
    & \leq 8\big(1 + \ln(T) + \KL(\u, \pi)\big)\big(\dstar + 2\sqrt{\dstar}\big)
\end{split}
\end{equation*}
In the second case $\max_{\gamma \in \Gamma} \gamma > \sqrt{\frac{\KL(\u, \pi) + \ln(T) + 1}{4\inner{\u}{\L_T^\rho}}}$. This implies that there is an $\gamma \in \Gamma$ that is within a factor 2 of $\sqrt{\frac{\KL(\u, \pi) + \ln(T) + 1}{\inner{\u}{\L_T^\rho}}}$ as 
\begin{align*}
& \min_{\gamma \in \Gamma} \gamma \leq \sqrt{\frac{\KL(\u, \pi) + \ln(T) + 1}{4(1 + \dTmax)T}}  \leq \sqrt{\frac{\KL(\u, \pi) + \ln(T) + 1}{4\inner{\u}{\L_T^\rho}}} \leq \max_{\gamma \in \Gamma} \gamma 
\end{align*}
and thus 
\begin{align*}
    & 4 \gamma^\star \inner{\u}{\L_T^\rho} + \frac{1}{\gamma^\star}\big( + \ln(T) + \KL(\u, \pi)\big) \leq 8 \sqrt{\big(\KL(\u, \pi) + \ln(T) + 1\big)\inner{\u}{\L_T^\rho}}
\end{align*}
Therefore, combining the two cases we find that 
\begin{equation}\label{eq:banditgammaopt}
\begin{split}
    4 \gamma^\star \inner{\u}{\L_T^\rho} + \frac{1}{\gamma^\star}\left(1 + \ln(T) + \KL(\u, \pi)\right) 
    \leq & 8 \sqrt{(\KL(\u, \pi) + \ln(T) + 1)\inner{\u}{\L_T^\rho}}  \\
    & + 8\left(1 + \ln(T) + \KL(\u, \pi)\right)\left(\dstar + 2\sqrt{\dstar}\right)
\end{split}
\end{equation}
As for the terms involving $\eta_t$, we have that 
\begin{align*}
    \eta_t 4 \ell_t(i_t) & = 4\ell_t(i_t)\sqrt{\frac{K\ln(T)}{4(1 + \dstar) + 4\sum_{s \in \Sset_t} \ell_s(i_s)}} \leq 4\ell_t(i_t)\sqrt{\frac{K\ln(T)}{4\sum_{s \leq t} \ell_s(i_s)}}
\end{align*}
Summing over $t$ and using that $\sum_{t = 1}^t \frac{x_t}{\sum_{s \leq t} x_t} \leq 2 \sqrt{\sumT x_t}$ we find that 
\begin{align*}
    \sumT \eta_t 4 \ell_t(i_t) \leq 4 \sqrt{K\ln(T)\sumT \ell_t(i_t)}
\end{align*}
which means that 
\begin{equation}\label{eq:banditetaopt}
\begin{split}
    \Eb{\sumT \eta_t 4\ell_t(i_t)^2 + \frac{1}{\eta_T}K \ln(T)} 
    & \leq 6 \Eb{\sqrt{K\ln(T)(1 + \dstar) + K\ln(T) \sumT \ell_t(i_t)}}\\
    & \leq 6 \Eb{\sqrt{K\ln(T) \sumT \ell_t(i_t)}} + 6\sqrt{K\ln(T)(1 + \dstar)}\\
    & \leq 6 \sqrt{K\ln(T) \sumT \Eb{\ell_t(i_t)}} + 6\sqrt{K\ln(T)(1 + \dstar)}
\end{split}
\end{equation}
where we used that $\sqrt{x + y} \leq \sqrt{x} + \sqrt{y}$ for $x, y \geq 0$ and Jensen's inequality. Combining equations \eqref{eq:bandittoopt}, \eqref{eq:banditgammaopt}, and \eqref{eq:banditetaopt} we obtain
\begin{align*}
    \E\Bigg[\sumT   \inner{\q_t - \u}{\bell_t}\bigg] \leq &  50 \dstar
    + 6 \sqrt{K\ln(T) \sumT \Eb{\ell_t(i_t)}} + 6\sqrt{K\ln(T)(1 + \dstar)} + 8 \sqrt{(\KL(\u, \pi) + \ln(T) + 1)\inner{\u}{\L_T^\rho}}  \\ 
    & + 8\left(1 + \ln(T) + \KL(\u, \pi)\right)\left(\dstar + 2\sqrt{\dstar}\right)\\
    \leq  & 6 \sqrt{K\ln(T) \sumT \Eb{\ell_t(i_t)}} + 6\sqrt{K\ln(T)(1 + \dstar)} + 8 \sqrt{(\KL(\u, \pi) + \ln(T) + 1)\inner{\u}{\L_T^\rho}}  \\ 
    & + 24\big(4 + \ln(T) + \KL(\u, \pi)\big)\dstar
\end{align*}
To complete the proof observe that $\Eb{\ell_t(i_t)} = \Eb{\inner{\q_t}{\bell_t}}= \Eb{\inner{\q_t - \u}{\bell_t}} + \inner{\u}{\bell_t}$. Using that $\sqrt{x + y} \leq \sqrt{x} + \sqrt{y}$ and using that $\sqrt{xy} = \half \inf_{\zeta > 0} \frac{x}{\zeta} + \zeta y$ for $x, y \geq 0$ we find
\begin{align*}
    \E\Bigg[\sumT   \inner{\q_t - \u}{\bell_t}\bigg] \leq 
    & 6 \sqrt{K\ln(T)\left(\E\Bigg[\sumT   \inner{\q_t - \u}{\bell_t} \bigg] + \inner{\u}{\L_T}\right)} 
    + 6\sqrt{K\ln(T)(1 + \dstar)} \\
    & + 8 \sqrt{(\KL(\u, \pi) + \ln(T) + 1)\inner{\u}{\L_T^\rho}}  + 24\left(4 + \ln(T) + \KL(\u, \pi)\right)\dstar \\
    \leq & 6 \sqrt{K\ln(T)\inner{\u}{\L_T}} + 6\sqrt{K\ln(T)(1 + \dstar)} + 8 \sqrt{(\KL(\u, \pi) + \ln(T) + 1)\inner{\u}{\L_T^\rho}}  \\ 
    & + 24\left(4 + \ln(T) + \KL(\u, \pi)\right)\dstar  + \half \E\Bigg[\sumT   \inner{\q_t - \u}{\bell_t} \bigg] + 18 K\ln(T)\\
    \leq & 6 \sqrt{K\ln(T)\inner{\u}{\L_T}} + 8 \sqrt{(\KL(\u, \pi) + \ln(T) + 1)\inner{\u}{\L_T^\rho}}  \\ 
    & + 24\left(5 + \ln(T) + \KL(\u, \pi)\right)\dstar + \half \E\Bigg[\sumT   \inner{\q_t - \u}{\bell_t} \bigg] + 21 K\ln(T)
\end{align*}
After reordering the above equation gives us
\begin{align*}
    \half \E\Bigg[\sumT   \inner{\q_t - \u}{\bell_t}\bigg] \leq & 21K\ln(T) + 6 \sqrt{K\ln(T)\inner{\u}{\L_T}} \\
    & + 8 \sqrt{(\KL(\u, \pi) + \ln(T) + 1)\inner{\u}{\L_T^\rho}}  + 24\left(5 + \ln(T) + \KL(\u, \pi)\right)\dstar
\end{align*}
which completes the proof after multiplying both sides by 2
\end{proof}

\section{DETAILS OF SECTION \ref{sec:concealed}}\label{app:concealed}

\thconcealed*
\begin{proof}
Fix $t$ and let $F(\cdot) = \eta_t F_t(\cdot)$. To verify the conditions of Lemma~\ref{lem:cheating regret}, recall that a Bregman divergence is non-negative, and that since $\eta_t \leq \eta_{t-1}$ we have that 
\begin{align*}
    B_{F_t}(\p, \p_1^{\phi}) = \frac{1}{\eta_t} B_F(\p, \p_1^{\phi})
    & \geq \frac{1}{\eta_{t-1}} B_F(\p, \p_1^{\phi}) = B_{F_{t-1}}(\p, \p_1^{\phi})
\end{align*}
To verify the conditions of Lemma~\ref{lem:drift} observe that $\big(\nabla F_t(\x)\big)_i$ is increasing and concave in $x(i)$ and that $\nabla^2 F_t(\x)$ positive definite, which implies that $F_t$ is Legendre---see also \citep{zimmert2020optimal}.

Thus, we may use Lemma \ref{lem:frameworkregret} to bound
\begin{align*}
    \Eb{\sumT  \inner{\q_t - \u}{\bltil_t}} 
    &\leq  \Eb{R_T(\u)}  + \Eb{\sumT (\bLhat_t - \bLhat_t^{av})^\top (\nabla^{2} R_t(\q_t))^{-1} \blhat_t}
\end{align*}
We continue by analysing the cost of implicit exploration. First, we rewrite the loss of the learner by using \citep[equation~(5)]{neu2015explore}
\begin{align*}
    \sumK q_t(i) \ltil_t(i) = \ell_t(i_t) - \epsilon_t\sumK \ltil_t(i)
\end{align*}
Since $\ltil_t(i) \leq \frac{\id[i_t = i]\ell_t(i)}{q_t(i)}$ we have that $\Eb{\ltil_t(i)} \leq 1$. This in turn implies that
\begin{align*}
    & \Eb{\sumT \inner{\q_t - \u}{\bltil_t}} \geq  \Eb{\regret_T(\u)} - \sumT \epsilon_t
\end{align*}
Similarly to~\eqref{eq:mintrick}, we use that $\big(\frac{1}{a} + \frac{1}{b}\big)^{-1} \le \min\{a,b\}$ for $a,b > 0$. Also, we use~\eqref{eq:key} and $\ell_t(i) \leq 1$ to write
\begin{align*}
    \big(\bLhat_t - \bLhat_t^{av}\big)^\top \big(\nabla^{2} R_t(\q_t)\big)^{-1}\blhat_t 
    & = \sumK \big(\Lhat_t(i) - \Lhat_t^{av}(i)\big)\left(\frac{1}{4 \eta_t q_t(i)^{3/2}} + \frac{1}{\gamma_t q_t(i)}\right)^{-1}\!\!\lhat_t(i) \\
    & \leq  \sumK \left(4 \eta_t q_t(i)^{3/2}\ltil_t(i)^2 +  \gamma_t q_t(i) \ltil_t(i)\sum_{s \in m_t(i)}\ltil_s(i)\right) \\
    & \leq  \sumK \left(4 \eta_t q_t(i)^{1/2}\ltil_t(i) +  \gamma_t \id[i_t = i]\sum_{s \in m_t(i)}\frac{\id[i_s = i]}{q_s(i)}\right)
\end{align*}
Let us now study the expectation of the first sum on the right-hand side of the last equation:
\begin{align*}
    \Eb{\sumK 4 \eta_t q_t(i)^{1/2}\ltil_t(i)} & \leq  \Eb{\sumK  4 \eta_t q_t(i)^{1/2}} 
    \leq \eta_t 4 \sqrt{K}
\end{align*}
where we used that $\max_{\p \in \triangle} \sumK \sqrt{p(i)} = \sqrt{K}$ \citep{zimmert2020optimal}. 

Since $\epsilon_s \geq \epsilon_t$ for $s \leq t$ we have that $\sum_{s \in m_t(i)} \ltil_t(i) \leq \frac{\dstar}{\epsilon_t}$ and thus by using that $\sumT \frac{y_t}{\sqrt{\sumt y_s}} \leq 2\sqrt{\sumT y_t}$ for $y_1  > 0$ and $y_2,\ldots,y_T \ge 0$ we find that
\begin{align*}
    \Eb{\sumK \gamma_t \id[i_t = i]\sum_{s \in m_t(i)}\frac{\id[i_s = i]}{q_s(i)}} 
    & = \Eb{\frac{\sqrt{\ln(K)}\sumK \id[i_t = i]\sum_{s \in m_t(i)}\frac{\id[i_s = i]}{q_s(i)}}{\sqrt{\dstar\epsilon_t^{-1} + \sum_{s = 1}^{t-1} \sumK \id[i_s = i]\sum_{s' \in m_s(i)}\frac{\id[i_{s'} = i]}{q_{s'}(i)}}}} \\
    & \leq \Eb{\frac{\sqrt{\ln(K)}\sumK \id[i_t = i]\sum_{s \in m_t(i)}\frac{\id[i_s = i]}{q_s(i)}}{\sqrt{\sumt \sumK \id[i_s = i]\sum_{s' \in m_s(i)}\frac{\id[i_{s'} = i]}{q_{s'}(i)}}}} \\
    & \leq  \Eb{2\sqrt{\ln(K) \sumT \sumK \id[i_t = i]\sum_{s \in m_t(i)}\frac{\id[i_s = i]}{q_s(i)}}} \\
    & \leq 2\sqrt{\ln(K) \Eb{\sumT \sumK \id[i_t = i]\sum_{s \in m_t(i)}\frac{\id[i_s = i]}{q_s(i)}}}
\end{align*}
where the final inequality is due to Jensen's inequality. Following the logic used to bound~\eqref{eq:startbanditFbound}, we can evaluate the expectation: 
\begin{align*}
    \Eb{\id[i_t = i]\sum_{s \in m_t(i)}\frac{\id[i_s = i]}{q_s(i)}} = & \Eb{\sumK q_t(i) \rho_t(i)}
\end{align*}
Let us now bound $R_T(\u)$:
\begin{align*}
    R_T(\u) & \leq \frac{\sqrt{K} - \sumK u(i)}{\eta_T} + \frac{\ln(K)}{\gamma_T} \leq \frac{\sqrt{K}}{\eta_T} + \frac{\ln(K)}{\gamma_T}
\end{align*}
To complete the proof we combine the above. Let
\[
    A_T = \sumT\Eb{\sumK q_t(i) \rho_t(i)}
\]
We have:
\begin{align*}
    \Eb{\sumT  \inner{\q_t - \u}{\bell_t}} & \leq  \Eb{R_T(\u)} + \sumT \epsilon_t
    + \Eb{\sumT \big(\bLhat_t - \bLhat_t^{av}\big)^\top \big(\nabla^{2} R_t(\q_t)\big)^{-1} \blhat_t}
\\&\le
    6\sqrt{KT} + 2\sqrt{\ln(K)A_T}  + 2\sqrt{T}
    + \sqrt{\ln(K)\left(\dstar\sqrt{T} + A_T\right)}
\\&\le
    6\sqrt{KT} + 2\sqrt{\ln(K)A_T}
    + 2\sqrt{T} + \sqrt{\dstar\sqrt{T} \ln(K)} + \sqrt{\ln(K)A_T}
\\&\le
    9\sqrt{KT} + 3\sqrt{\ln(K)A_T} + \half \dstar
\end{align*}
where we used that $\sqrt{ab} \leq \frac{1}{2} a + \frac{1}{2} b$ for $a, b > 0$, and that $\ln(K) \leq \sqrt{K}$.
\end{proof}

\end{document}